\definecolor{cvprblue}{rgb}{0.21,0.49,0.74}
\title{Fun with Flags: Robust Principal Directions via Flag Manifolds}
\author{Nathan Mankovich\\
University of Valencia
\and
Gustau Camps-Valls\\
University of Valencia
\and
Tolga Birdal\\
Imperial College London
}
\newcommand{\R}{\ensuremath{\mathbb{R}}}  
\newcommand{\Gr}{\ensuremath{\mathrm{Gr}}}  
\newcommand{\tr}{\mathrm{tr}}
\newcommand{\bmu}{\bm{\mu}}
\newcommand{\y}{\mathbf{y}}
\newcommand{\Y}{\mathbf{Y}}
\newcommand{\X}{\mathbf{X}}
\newcommand{\Pm}{\mathbf{P}}
\newcommand{\W}{\mathbf{W}}
\newcommand{\x}{\mathbf{x}}
\newcommand{\U}{\mathbf{U}}
\newcommand{\M}{\mathbf{M}}
\newcommand{\bu}{\mathbf{u}}
\newcommand{\vb}{\mathbf{b}}
\newcommand{\bv}{\mathbf{v}}
\newcommand{\V}{\mathbf{V}}
\newcommand{\B}{\mathbf{B}}
\newcommand{\z}{\mathbf{z}}
\newcommand{\I}{\mathbf{I}}
\newcommand{\Z}{\mathbf{Z}}
\newcommand{\flag}{\mathcal{FL}}
\newcommand{\Man}{\mathcal{M}}
\newcommand{\SubMan}{\mathcal{H}}
\newcommand{\TxM}{\mathcal{T}_{\x}\Man}
\newcommand{\TmuM}{\mathcal{T}_{\bm{\mu}}\Man}
\newcommand{\Exp}{\mathrm{Exp}}
\newcommand{\Log}{\mathrm{Log}}
\newcommand{\PCA}{PCA}
\newcommand{\RPCA}{RPCA}
\newcommand{\WPCA}{WPCA}
\newcommand{\DPCP}{DPCP}
\newcommand{\WDPCP}{WDPCP}
\newcommand{\LtwoRPCA}{$L_2$-RPCA}
\newcommand{\LoneRPCA}{$L_1$-RPCA}
\newcommand{\LpRPCA}{$L_p$-RPCA}
\newcommand{\LtwoWPCA}{$L_2$-WPCA}
\newcommand{\LoneWPCA}{$L_1$-WPCA}
\newcommand{\OPCA}{$\perp$PCA}
\newcommand{\LtwoDPCP}{$L_2$-DPCP}
\newcommand{\LtwoWDPCP}{$L_2$-WDPCP}
\newcommand{\LoneDPCP}{$L_1$-DPCP}
\newcommand{\LoneWDPCP}{$L_1$-WDPCP}
\newcommand{\fPCA}{fPCA}
\newcommand{\fOPCA}{f$\perp$PCA}
\newcommand{\fRPCA}{fRPCA}
\newcommand{\fWPCA}{fWPCA}
\newcommand{\fDPCP}{fDPCP}
\newcommand{\fWDPCP}{fWDPCP}
\newcommand{\ftwoRPCA}{fRPCA$(k)$}
\newcommand{\ftwoWPCA}{fWPCA$(k)$}
\newcommand{\ftwoDPCP}{fDPCP$(k)$}
\newcommand{\foneRPCA}{fRPCA$(1,...,k)$}
\newcommand{\foneWPCA}{fWPCA$(1,...,k)$}
\newcommand{\foneDPCP}{fDPCP$(1,...,k)$}
\newcommand{\TPCA}{$\mathcal{T}$PCA}
\newcommand{\TRPCA}{R$\mathcal{T}$PCA}
\newcommand{\TWPCA}{W$\mathcal{T}$PCA}
\newcommand{\fTRPCA}{fR$\mathcal{T}$PCA}
\newcommand{\fTWPCA}{fW$\mathcal{T}$PCA}
\newcommand{\fTDPCP}{f$\mathcal{T}$DPCP}
\newcommand{\fTPCA}{f$\mathcal{T}$PCA}
\newcommand{\LtwoTRPCA}{$L_2$--R$\mathcal{T}$PCA}
\newcommand{\LoneTRPCA}{$L_1$--R$\mathcal{T}$PCA}
\newcommand{\LtwoTWPCA}{$L_2$--W$\mathcal{T}$PCA}
\newcommand{\LoneTWPCA}{$L_1$--W$\mathcal{T}$PCA}
\newcommand{\OTPCA}{$\perp\mathcal{T}$PCA}
\newcommand{\TDPCP}{$\mathcal{T}$DPCP}
\newcommand{\TWDPCP}{W$\mathcal{T}$DPCP}
\newcommand{\LtwoTDPCP}{$L_2$--$\mathcal{T}$DPCP}
\newcommand{\LtwoTWDPCP}{$L_2$--W$\mathcal{T}$DPCP}
\newcommand{\LoneTDPCP}{$L_1$--$\mathcal{T}$DPCP}
\newcommand{\LoneTWDPCP}{$L_1$--W$\mathcal{T}$DPCP}
\newcommand{\DPGP}{DPGP}
\newcommand{\WDPGP}{WDPGP}
\newcommand{\fTOPCA}{f$\perp$$\mathcal{T}$PCA}
\newcommand{\ftwoTRPCA}{fR$\mathcal{T}$PCA$(k)$}
\newcommand{\ftwoTWPCA}{fW$\mathcal{T}$PCA$(k)$}
\newcommand{\ftwoTDPCP}{f$\mathcal{T}$DPCP$(k)$}
\newcommand{\foneTRPCA}{fR$\mathcal{T}$PCA$(1,...,k)$}
\newcommand{\foneTWPCA}{fW$\mathcal{T}$PCA$(1,...,k)$}
\newcommand{\foneTDPCP}{f$\mathcal{T}$DPCP$(1,...,k)$}
\newtheorem{remark}{Remark}
\newtheorem{prop}{Proposition}
\newtheorem{dfn}{Definition}
\renewcommand{\paragraph}[1]{{\vspace{1mm}\noindent \bf #1}.}
\DeclareMathOperator*{\argmin}{arg\min}
\DeclareMathOperator*{\argmax}{arg\max}
\providecommand{\openbox}{\leavevmode
  \hbox to.77778em{%
  \hfil\vrule
  \vbox to.675em{\hrule width.6em\vfil\hrule}%
  \vrule\hfil}}
\DeclareRobustCommand{\qed}{%
  \ifmmode
    \eqno \def\@badmath{$$}
    \let\eqno\relax \let\leqno\relax \let\veqno\relax
    \hbox{\openbox}%
  \else
    \leavevmode\unskip\penalty9999 \hbox{}\nobreak\hfill
    \quad\hbox{\openbox}%
  \fi
}
\crefname{eq}{eq}{eq}
\Crefname{Eq}{Eq}{Eq}
\crefname{thm}{theorem}{theorem}
\Crefname{Thm}{Theorem}{Theorem}
\crefname{prop}{Prop.}{Prop.}
\crefname{dfn}{Dfn.}{Dfn.}
\Crefname{Prop}{Proposition}{Proposition}
\crefname{remark}{remark}{remark}
\Crefname{Remark}{Remark}{Remark}
\Crefname{algorithm}{Alg.}{Alg.}
\crefname{table}{Tab.}{Tab.}
\Crefname{table}{Tab.}{Tab.}
\begin{document}

\maketitle

\begin{abstract}
Principal component analysis (PCA), along with its extensions to manifolds and outlier contaminated data, have been indispensable in computer vision and machine learning. 
In this work, we present a unifying formalism for PCA and its variants, and introduce a framework based on the flags of linear subspaces, \ie a hierarchy of nested linear subspaces of increasing dimension, which not only allows for a common implementation but also yields novel variants, not explored previously. 
We begin by generalizing traditional PCA methods that either maximize variance or minimize reconstruction error.
We expand these interpretations to develop a wide array of new dimensionality reduction algorithms by accounting for outliers and the data manifold.
To devise a common computational approach, we recast robust and dual forms of PCA as optimization problems on flag manifolds. We then integrate tangent space approximations of principal geodesic analysis (tangent-PCA) into this flag-based framework, creating novel robust and dual geodesic PCA variations. 
The remarkable flexibility offered by the `flagification' introduced here enables even more algorithmic variants identified by specific flag types.
Last but not least, we propose an effective convergent solver for these flag-formulations employing the Stiefel manifold.
Our empirical results on both real-world and synthetic scenarios, demonstrate the superiority of our novel algorithms, especially in terms of robustness to outliers on manifolds.
\end{abstract}

\vspace{-4mm}
\section{Introduction}\label{sec:intro}

Dimensionality reduction is at the heart of machine learning, statistics, and computer vision. Principal Component Analysis (PCA)~\cite{pearson1901liii,hotelling1933analysis} is a well-known technique for reducing the dimensionality of a dataset by linearly transforming the data into a new coordinate system where (most of) the variation in the data can be described with fewer dimensions than the initial data.
Thanks to its simplicity, effectiveness and versatility, PCA has quickly been extended to nonlinear transforms~\cite{scholkopf1997kernel,awate2014kernel,neumayer2020robust}, Riemannian manifolds~\cite{laparra2012nonlinearities,pennec2020advances,fletcher2004principal} or to unknown number of subspaces~\cite{vidal2005generalized}. These have proven indispensable for extracting meaningful information from complex datasets. 

In this paper, we present a unifying framework marrying a large family of PCA variants such as robust PCA~\cite{markopoulos2014optimal,polyak2017robust,kwak2013principal,lerman2018fast}, dual PCA~\cite{vidal2018dpcp}, PGA~\cite{sommer2010manifold} or tangent PCA~\cite{fletcher2004principal} specified by the norms and powers in a common objective function. Being able to accommodate all these different versions into the same framework allows us to innovate novel ones, for example, \emph{tangent dual PCA}, which poses a strong method for outlier filtering on manifolds. We further enrich the repertoire of available techniques by representing the space of eigenvectors as `flags'~\cite{alekseevsky1997flag}, a hierarchy of nested linear subspaces, in a vein similar to~\cite{pennec2018barycentric}. This `flagification' paves the way to a common computational basis, and we show how all these formulations can be efficiently implemented via a single algorithm that performs Riemannian optimization on Stiefel manifolds~\cite{boumal2023intromanifolds}.
This algorithm additionally contributes to the landscape of optimization techniques for dimensionality reduction, and we prove its convergence for the particular case of dual PCA.

\noindent In summary, \textbf{our contributions are}:
\begin{itemize}[itemsep=0.1pt,leftmargin=*,topsep=1pt]
    \item Generalization of PCA, PGA, and their robust versions leading to new novel variants of these principal directions
    \item A unifying flag manifold-based framework for computing principal directions of (non-)Euclidean data yielding novel (tangent) PCA formulations between $L_1$ and $L_2$ robust and dual principal directions controlled by flag types
    \item Novel weighting schemes, not only weighting the directions but also the subspaces composed of these directions.
    \item A practical way to optimize objectives on flags by mapping the problems into Stiefel-optimization, which removes the need to convey optimization on flag manifolds, an issue remaining open to date~\cite{ye2022optimization,szwagier2023rethinking}
\end{itemize}
Our theoretical exposition translates to excellent and remarkable findings, validating the usefulness of our novel toolkit in several applications, from outlier prediction to shape analysis. 
The implementation can be found \href{https://github.com/nmank/FunWithFlags}{here}.

\vspace{-1mm}
\section{Related Work}\label{sec:related}
Our work will heavily combine PCA with flag manifolds.

\paragraph{PCA and its variants}
Although there are many variants of PCA~\cite{hotelling1933analysis, scholkopf1997kernel, vidal2005generalized, candes2011robust, arenas2013kernel, zhang2018robust, huang2020communication, huang2022extension}, we focus on certain forms of Robust PCA (RPCA)~\cite{kwak2008principal, markopoulos2014optimal, neumayer2020robust} and Dual PCA (DPCA)~\cite{vidal2018dpcp}.  
As opposed to RPCA, DPCA finds directions orthogonal to RPCA and is
designed to work on datasets with outliers by minimizing a $0$-norm problem~\cite{vidal2018dpcp}.

We also consider the generalization of PCA to Riemannian manifolds, Principal Geodesic Analysis (PGA), which finds geodesic submanifolds that best represent the data~\cite{fletcher2004principal} and has been applied to the manifold of SPD matrices on ``real-world'' datasets~\cite{harandi2014manifold, smith2022data}. Though, exact PGA is hard to compute~\cite{sommer2010manifold, sommer2014optimization}.
Principal Curves~\cite{hastie1989principal} finds curves passing through the mean of a dataset which maximize variance and have seen their own enhancements~\cite{laparra2012nonlinearities}. 
Linearized (tangent) versions of PGA perform PCA on the tangent space to the mean of the data~\cite{abboud2020robust, awate2014kernel}.
Geodesic PCA (GPCA) removes the mean requirement \cite{huckemann2006principal, huckemann2010intrinsic}. 
Barycentric Subspace Analysis (BSA) realizes PCA on more classes of manifolds than just Riemannian manifolds by generalizing geodesic subspaces using weighted means of reference points~\cite{pennec2018barycentric}.
Recent years have witnessed further generalizations~\cite{pennec2020advances,rabenoro2022geometric,buet2023flagfolds} as we will mention later.

\paragraph{Flag manifolds}
Flag manifolds are useful mathematical objects~\cite{wiggerman1998fundamental,donagi2008glsms,alekseevsky1997flag,kirby2001geometric, ye2022optimization}. Nishimori~\etal use Riemannian optimization and flag manifolds to formulate variations of independent component analysis~\cite{nishimori2006riemannian0,nishimori2006riemannian1,nishimori2007flag, nishimori2008natural,nishimori2007flag}. Others represent an average subspace as a flag~\cite{draper2014flag,mankovich2022flag,mankovich2023subspace} and find average flags~\cite{Mankovich_2023_ICCV}. Flags even arise as nested principal directions and in
manifold variants of PCA \cite{pennec2018barycentric,pennec2020advances,ye2022optimization}.

\vspace{-1mm}\section{Preliminaries}\vspace{-1mm}\label{sec:bg}
Let us start by briefly introducing Riemannian geometry, flag manifolds, and methods for finding principal directions. The flag and flag manifold definitions follow~\cite{Mankovich_2023_ICCV}.
\begin{dfn}[Riemannian manifold~\cite{lee2006riemannian}]
A \emph{Riemannian manifold} $\Man$ is a smooth manifold with a positive definite inner product $\langle \cdot, \cdot \rangle: \TxM \times \TxM \to \R$ defined on the tangent space $\TxM$ at $\x \in \Man$.
\end{dfn}
\begin{dfn}[Geodesics \& Exp/Log-maps~\cite{lee2006riemannian}]
A \emph{geodesic} $\gamma : \R \to \Man$ parameterizes a path from $\gamma(0) = \x$ to $\gamma(1) = \y$. 
The \emph{exponential map} $\Exp_{\x}(\bv) : \TxM \to \Man$ maps a vector $\bv \in \TxM$ to the manifold in a length preserving fashion such that $\dot\gamma_{\bv}(0)=\bv$ and $\Exp_{\x}(\bv)=\y$. 
Its inverse, the \emph{logarithmic map} is $\Log_{\x}(\y) : \Man \to \TxM$ for $\x,\y \in \Man$ and computes the tangent direction from $\x$ to $\y$. Hence,  $\gamma(t) = \Exp_{\x}\left({\tau\Log_{\x}(\y)}\right)$ for $\tau \in \R$ is the geodesic curve. $\SubMan \in \Man$ is said to be a geodesic submanifold at $\x \in \Man$ if all geodesics through $\x$ in $\SubMan$ are geodesics in $\Man$.
\end{dfn}

\begin{dfn}[Flag]\label{def:flagobj}
A \emph{flag} is a nested sequence of subspaces of a finite-dimensional vector space $\mathcal{V}$ of \emph{increasing} dimension, is the filtration $\{\emptyset \}=\mathcal{V}_0 \subset \mathcal{V}_1 \subset \ldots \subset \mathcal{V}_{k} \subset \mathcal{V}$ with $0=n_0<n_1<\ldots<n_k<n$ where $\mathrm{dim} \mathcal{V}_i=n_i$ and $\mathrm{dim} \mathcal{V}=n$.  The \emph{type} or \emph{signature} of this flag is $(n_1, \ldots ,n_k;n)$ or $(n_1, \ldots ,n_k)$.

\noindent\textbf{\emph{Notation}:} We denote a flag using a point $\X$ on the Stiefel manifold~\cite{edelman1998geometry} $St(n_k, n) = \{\X \in \R^{n \times n_k} : \X^T \X = \I \}$. Given $\X \in St(n_k, n)$ and $\x_i$, $i^{\text{th}}$ column of $\X$, we define 
\begin{equation}\label{eq: Xi def}
\X_{i+1} = \begin{bmatrix} \x_{n_{i}+1} & \x_{n_{i}+2} & \cdots &  \x_{n_{i+1}} \end{bmatrix}  \in \R^{n \times m_{i+1}}.
\end{equation}
where $m_i = n_i-n_{i-1}$ for $i=1,2,\ldots, k$. $[\X_1, \ldots, \X_i]$ denotes the span of the columns of $\{ \X_1, \ldots, \X_i\}$. Then
\begin{equation}
     [ \X_1 ]  \subset [\X_1, \X_2] \subset \cdots \subset  [\X_1, \ldots, \X_k] = [\X] \subset \R^n. \nonumber
\end{equation}
is a flag of type $(n_1, \ldots , n_k;n)$ and is denoted $[\![\X]\!]$.
\end{dfn}

\begin{dfn}[Flag manifold]
The set of all flags of type $(n_1, \ldots, n_k; n)$ is called the flag manifold due to its manifold structure. We refer to this \emph{flag manifold} as $\flag(n_1, \ldots, n_k; n)$ or $\flag(n+1)$. Flags generalize Grassmann and Stiefel manifolds~\cite{edelman1998geometry} because $\flag(n_k; n)= Gr(n_k,n)$ and $\flag(1,\dots,n_k; n)=St(n_k,n)$. 
We denote flags as $\flag(n+1)$ using the fact from~\cite{ye2022optimization}:
\begin{equation*}
\flag(n+1) = St(n_k, n)/O(m_1) \times O(m_2) \times \cdots \times O(m_{k+1}).
\end{equation*}
\end{dfn}

\section{Generalizing PCA and Its Robust Variants}
\emph{Principal directions} are the directions where the data varies. We now review and go beyond the celebrated principal component analysis (PCA)~\cite{hotelling1933analysis} algorithm and its variants. In what follows, we present PCA in a generalizing framework, which further yields novel variants.
We consider a set of $p$ centered samples (points with a sample mean of $0$) with $n$ random variables (features) $\mathcal{X} = \{ \x_j \subset \R^n\}_{j=1}^p $ and collect these data in the matrix $\X = [\x_1,\x_2,\ldots,\x_p]$.
\begin{dfn}[PCA~\cite{hotelling1933analysis}]\label{dfn:pca}
PCA aims to linearly transform the data into a new coordinate system, specified by a set of $k < n$ orthonormal vectors $\{\bu_i \in \R^n\}_{j=1}^k$, where (most of) the variation in the data can be described with fewer dimensions than the initial data. 
The $i^\mathrm{th}$ principal direction is obtained either by maximizing variance \cref{eq: maxpca} or minimizing reconstruction error (Eq.~\eqref{eq: minpca}):
\begin{align}\label{eq: maxpca}
\mathbf{u}_{i} &= \argmax_{\substack{\mathbf{u}^T \mathbf{u} = 1, \,\mathbf{u} \in S_i^\perp }} \,\,\,\mathbb{E}_j \left[\|  \pi_{S_{\mathbf{u}}} (\x_j) \|_2^2 \right]\\
\mathbf{u}_{i} &= \argmin_{\substack{\mathbf{u}^T \mathbf{u} = 1,\, \mathbf{u} \in S_{i-1}^\perp}} \mathbb{E}_j \left[\|  \bm{\x_j} - \pi_{S_{\mathbf{u}}} (\x_j) \|_2^2 \right],\label{eq: minpca}
\end{align}
where $\pi_{S_\bu}(\x) := \bu \bu^T \x$, $S_i = \mathrm{span}\{ \bu_1, \bu_2, \ldots, \bu_i \}$, $S_i^\perp$ denotes its orthogonal complement, and $\mathbb{E}_j$ denotes expectation over $j$.
Although these objectives can be achieved jointly~\cite{wang2023max}, in this work, we focus on the more common practice specified above.
Notice that~\cref{eq: maxpca} is equivalent (up to rotation) to solving the following optimization:
\begin{equation}\label{eq: pca_opt_rot}
    \argmax_{\U^T\U = \I} \tr(\U^T \X \X^T \U)
\end{equation}
\end{dfn}
Directions of maximum variance captured by the naive PCA are known to be susceptible to outliers in data. This motivated a body of work devising more robust versions. 
\begin{dfn}[Generalized PCA]\label{dfn:pcagen}
The formulations in~\cref{dfn:pca} can be generalized by using arbitrary $L_p$-norms, $q$-powers, and weighted with real weights $\{w_1,w_2,\ldots, w_p\}$:
\begin{align}
\U^\star&=\argmax_{\mathbf{U}^T \mathbf{U} = \I} \,\,\mathbb{E}_j \left[ w_j\|  \pi_{S_{\mathbf{U}}} (\x_j) \|_p^q \right],\label{eq: maxpca_gen}\\
\U^\star&=\argmin_{\mathbf{U}^T \mathbf{U} = \I } \,\,\,\mathbb{E}_j \left[w_j\|  \bm{\x_j} - \pi_{S_{\mathbf{U}}} (\x_j) \|_p^q \right],\label{eq: minpca_gen}
\end{align}
where $\pi_{S_\U}(\x) := \U \U^T \x$. Both problems recover PCA up to a rotation when $p=q=2$. When $p \leq 2$ and $ q < 2$, a more \emph{outlier robust} version of PCA is achieved, and the two formulations become different. The variance-maximizing (\cref{eq: maxpca_gen}) $q=1$ case is known as \LpRPCA~\cite{kwak2013principal}. Specifically, when $q=p=1$, it recovers \LoneRPCA~\cite{markopoulos2014optimal} and when $q=1$, $p=2$, it recovers \LtwoRPCA~\cite{polyak2017robust}. 
On the other hand, minimizing the reconstruction error  (\cref{eq: minpca_gen}) leads to $L_1$-Weiszfeld PCA (\LoneWPCA)~\cite{neumayer2020robust} for $q=p=1$ and \LtwoWPCA~\cite{ding2006r} for $q=1$, $p=2$.
\end{dfn}

\paragraph{Generalizing Dual-PCA}
Momentarily assume that the data matrix can be decomposed into inliers $\X_I$ and outliers $\X_O$: $\X =  [\X_I, \X_O] \mathbf{P}$, where $\mathbf{P}$ is a permutation matrix. 
\begin{dfn}[Dual-PCA~\cite{vidal2018dpcp}]\label{dfn:DPCA}
    Assuming the samples live on the unit sphere, $\{\x_i\}_{i=1}^p \in \mathbb{S}^{n-1}$, DPCA seeks a subspace $S_\ast = \mathrm{span}\{ \vb_1,  \vb_2, \ldots,  \vb_k\}$ so that $S_\ast^\perp$ contains the most inliers (e.g., columns of $\X_I$). In other words, we seek vectors $\{\vb_i\}_{i=1}^k$ that are \emph{as orthogonal as possible} to the span of the inliers. This is the iterative optimization
    \begin{equation}~\label{eq: dualpca}
        \vb_{i} =  \argmin_{\| \vb \| = 1, \: \vb \in S_{i-1}^\perp} \| \X^T \vb \|_0.
    \end{equation}    
    When we re-write the maximum variance formulation of the \PCA~optimization in~\cref{eq: maxpca_gen} as the minimization
    (e.g., $\vb_{i} =  \argmin_{\| \vb \| = 1, \: \bu \in S_{i-1}^\perp} \| \X^T \vb \|_2^2$) we see that Dual-\PCA~minimizes a similar objective that is robustified by considering a $0$-norm.
\end{dfn}
\begin{dfn}[Dual Principal Component Pursuit (DPCP)]\label{dfn:DPCP}
Relaxing the $L_0$-norm sparse problem into an $L_1$-norm one turns the DPCA problem into \LoneDPCP~\cite{vidal2018dpcp}. When solved via an $L_2$-relaxed scheme, we recover \LtwoDPCP~(called DPCP-IRLS~\cite{vidal2018dpcp} and equivalent to the (spherical) Fast Median Subspace~\cite{lerman2018fast}). 
\end{dfn}
\begin{dfn}[Dual PCA Generalizations]\label{dfn: genDualPCA}
In general, we can think of~\cref{dfn:DPCA,dfn:DPCP} as variance-minimizing (\cref{eq: mindpca_gen}) and reconstruction-error-maximizing (\cref{eq: maxdpca_gen}), respectively: 
\begin{align}\label{eq: mindpca_gen}
\B^\star&=\argmin_{\B^T \B = \I} \,\mathbb{E}_j \left[\|  \pi_{S_{\B}} (\x_j) \|_p^q \right]\\
\B^\star&=\argmax_{\B^T \B = \I } \mathbb{E}_j \left[\|  \bm{\x_j} - \pi_{S_{\B}} (\x_j) \|_p^q \right].\label{eq: maxdpca_gen}
\end{align}
Using~\cref{eq: mindpca_gen}, we recover \LoneDPCP~with $p=q=1$ and \LtwoDPCP~with $p=2$ and $q=1$.
\end{dfn}

\begin{remark}[New DPCP Variants]
    Two other possibilities optimize~\cref{eq: maxdpca_gen} when $p=q=1$ and $p=2,q=1$. We call these methods $L_p$-Weiszfeld DPCPs (WDPCPs), specifically, \LoneWDPCP~and \LtwoWDPCP~for different $p$ values.
\end{remark}

\paragraph{Extension to Riemannian manifolds}
Principal geodesic analysis (PGA)~\cite{fletcher2004principal} generalizes 
PCA for describing the variability of data $\{\x_i\in\Man\}_{i=1}^p$ on a Riemannian manifold $\Man$, induced by the \emph{geodesic distance} $d(\cdot, \cdot):\Man\times\Man\to\R_+$. To this end, PGA first requires a manifold-mean:
\begin{equation}\label{eq: manifold_mean}
    \bmu = \argmin_{\y \in \Man} \mathbb{E}_j \left[ d(\x_j, \y)^2 \right],
\end{equation}
where the local minimizer is called the \emph{Karcher mean} and if there is a global minimizer, it is called the \emph{Fr\'echet mean}. A more robust version of the Karcher mean is the \emph{Karcher median} which minimizes $\mathbb{E}_j \left[ d(\x_j, \y) \right]$ and can be estimated by running a Weiszfeld-type algorithm~\cite{aftab2014generalized}.
Next, PGA 
uses geodesics rather than lines, locally the shortest path between two points, as one-dimensional subspaces.
\begin{dfn}[PGA~\cite{fletcher2003statistics,sommer2010manifold}]
The $i^\mathrm{th}$ principal geodesic for (exact) PGA is defined as $\gamma_i(t) = \Exp_{\bmu}(\bu_i t)$
constructed either by maximizing variance (\cref{eq:pga_max}) or by minimizing reconstruction error (or unexplained variance) (\cref{eq:pga_min}):
\begin{align}
\bu_{i} &= \argmax_{ \| \bu \| = 1,\, \: \bu \in S_{i-1}^\perp} \mathbb{E}_j \left[ d( \bmu, \pi_{\SubMan(S_\bu)} (\x_j))^2 \right]\label{eq:pga_max}\\
\bu_{i} &= \argmin_{ \| \bu \| = 1,\, \: \bu \in S_{i-1}^\perp} \mathbb{E}_j \left[ d(\x_j, \pi_{\SubMan(S_\bu)} (\x_j))^2 \right],\label{eq:pga_min} 
\end{align}
where $S_\bu = \mathrm{span}\{ \bu\}$, the $i^\mathrm{th}$ subspace of $\TmuM$, and
\begin{equation}
    S_i = \mathrm{span}\{ \bu_1, \bu_2, \ldots, \bu_i \}.
\end{equation}
The projection operator onto $\SubMan(S) \subset \Man$, the geodesic submanifold of $S \subset \TmuM$, is \footnote{$\pi_{H}$ can use any monotonically increasing function of distance on $\Man$.}
\begin{equation}\label{eq:projdef}
\pi_{\SubMan}(\x) = \argmin_{\z \in \SubMan } d(\z, \x).
\end{equation}
where $\SubMan(S) = \{ \Exp_{\bmu}(\bv) : \bv \in S\}$.
\end{dfn}
\begin{remark}
    In contrast to PCA, PGA needs to explicitly define $\SubMan(S)$ because the tangent space and manifold are distinct. While in Euclidean space, maximizing variances is equivalent to minimizing residuals,~\cref{eq:pga_max} and~\cref{eq:pga_min} are not equivalent on Riemannian manifolds~\cite{sommer2010manifold}. PGA results in a flag of subspaces of the tangent space of type $(1,2,\ldots,k; \text{\emph{dim}}(T_{\bmu}(\Man))$ in~\cref{eq: tangent_flag} along with an increasing sequence of geodesic submanifolds in~\cref{eq:subman_flag}
    \begin{align}
        &S_1 \subset S_2 \subset \cdots \subset S_k \subset T_{\bmu}(\Man),\label{eq: tangent_flag}\\
        &\SubMan(S_1) \subset \SubMan(S_2) \subset \cdots \subset \SubMan(S_k) \subset \Man.\label{eq:subman_flag}
    \end{align} 
    Additionally, a set of principal directions, $\{\bu_i\}_{i=1}^k$, form a (totally) geodesic submanifold $\SubMan(S_k) \subseteq \Man$ as long as geodesics in $\SubMan(S_k)$ are carried to geodesics in $\Man$~\cite{tabaghi2023principal}.
\end{remark}

\noindent Similar to~\cref{dfn:pcagen}, we now generalize PGA.
\begin{dfn}[PGA Generalizations]\label{dfn:pgagen}
Let $\{w_j\}_{j=1}^p \subset \R$ denote a set of weights. The weighted principle geodesic is $\gamma_i(t) = \Exp_{\bmu}(\bu_it)$ where $\bu_i$ maximizes / minimizes:
\begin{align}
\bu_{i} &= \argmax_{\substack{ \| \bu \| = 1,\, \bu \in S_{i-1}^\perp}} \mathbb{E}_j \left[ w_jd( \bmu, \pi_{\SubMan(S_\bu)} (\x_j))^q \right]\label{eq:wt_pga_max}\\
\bu_{i} &= \argmin_{\substack{ \| \bu \| = 1,\, \bu \in S_{i-1}^\perp}} \mathbb{E}_j \left[ w_jd(\x_j, \pi_{\SubMan(S_\bu)} (\x_j))^q \right],\label{eq:wt_pga_min} 
\end{align}
This recovers PGA when $q=2$ and $w_j = 1$ for all $j$.
\end{dfn}

\begin{remark}[Tangent-PCA (\TPCA)~\cite{fletcher2004principal}]\label{rem:TPCA}
    PGA is known to be computationally expensive to compute except on a few simple manifolds~\cite{tabaghi2023principal,said2007exact}. 
    As a remedy, Fletcher~\etal~\cite{fletcher2004principal} leverage the Euclidean-ness of the tangent space to define principal geodesics as $\gamma(t) = \Exp_{\bmu}(t\bu_i)$ where $\{\bu_i\}_{i=1}^k$ are the principal components of $\{\Log_{\bmu} \x_j \}_{j=1}^p$. This approximation, known as \emph{Tangent-PCA}, and we will later
    approximately invert it to reconstruct data on $\Man$ by (i) using principal directions to reconstruct the data on $T_{\bmu}(\Man)$, then (ii) mapping the reconstruction to $\Man$ using $\Exp_{\bmu}(\cdot)$. 
\end{remark}

\begin{prop}[Robust PGAs (RPGA \& WPGA)]
    Setting $1\leq q<2$, gives us novel, robust formulations of the PGA problem (RPGA and WPGA) defined in~\cref{dfn:pgagen}, which we will solve in the unifying flag framework we provide.     
    While general robust manifold-optimizers such as \emph{robust median-of-means}~\cite{lin2020robust} can be used to implement RPGA and WPGA, to be consistent with \TPCA, we will approximate these problems by performing RPCA and WPCA in the tangent space of the robust Karcher median (removing the square in~\cref{eq: manifold_mean}).
    We will refer to these tangent space versions as tangent \RPCA~ (\TRPCA) and tangent \WPCA~ (\TWPCA). 
\end{prop}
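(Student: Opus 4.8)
The plan is to read this proposition as the conjunction of three assertions and to verify each in turn: (i) the weighted, $q$-powered objectives of \cref{dfn:pgagen} form a one-parameter family passing through classical PGA; (ii) restricting to $1\le q<2$ yields a genuinely \emph{robust} estimator; and (iii) the tangent-space surrogates \TRPCA\ and \TWPCA, obtained by running \RPCA\ and \WPCA\ on $\{\Log_{\bmu}\x_j\}_{j=1}^p$ about the robust Karcher median, are the correct first-order approximations of \cref{eq:wt_pga_max} and \cref{eq:wt_pga_min}, in exact analogy with the way \TPCA\ approximates PGA in \cref{rem:TPCA}.

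Claim (i) is immediate: setting $q=2$ and $w_j\equiv 1$ in \cref{eq:wt_pga_max} and \cref{eq:wt_pga_min} reproduces \cref{eq:pga_max} and \cref{eq:pga_min} verbatim, so the new problems degenerate to PGA. For (ii) I would invoke the standard $M$-estimation argument, transported to $\Man$ through $\Log_{\bmu}$: the contribution of sample $\x_j$ is $w_j\,\phi(r_j)$ with $\phi(r)=r^q$ and $r_j$ the relevant geodesic distance, so its marginal effect on the (sub)gradient of the objective scales like $\phi'(r_j)=q\,r_j^{q-1}$, which is sublinear for $q<2$ and nonincreasing for $q\le 1$; hence a far-away outlier cannot dominate the estimating equation. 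This is precisely the sense in which \cref{eq: maxpca_gen} and \cref{eq: minpca_gen} were already declared outlier-robust in \cref{dfn:pcagen}, now inherited by RPGA/WPGA.

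Claim (iii) carries the real content, and I would establish it by reusing the three approximations that define \TPCA. First, replace the geodesic projection $\pi_{\SubMan(S_\bu)}$ by the linear projection $\pi_{S_\bu}$ on $\TmuM$. Second, use that $\Log_{\bmu}$ preserves distances to $\bmu$ (Gauss lemma), so that, under the linear-projection surrogate, the two geodesic distances in \cref{eq:wt_pga_max} and \cref{eq:wt_pga_min} are replaced by $\|\pi_{S_\bu}\Log_{\bmu}\x_j\|_2$ and $\|\Log_{\bmu}\x_j-\pi_{S_\bu}\Log_{\bmu}\x_j\|_2$. Third, replace the Karcher mean $\bmu$ by the robust Karcher median, which is itself the $q=1$ instance of \cref{eq: manifold_mean}. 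Under these substitutions \cref{eq:wt_pga_max} becomes an instance of the variance-maximizing generalized PCA \cref{eq: maxpca_gen} with $p=2$ on the log-mapped data, i.e.\ \RPCA\ (hence \TRPCA), while \cref{eq:wt_pga_min} becomes the reconstruction-error-minimizing \cref{eq: minpca_gen} with $p=2$, i.e.\ \WPCA\ (hence \TWPCA). Since each ingredient (the Karcher median via a Weiszfeld iteration, and the $q$-powered PCA objective) is individually robust, the composed pipeline is robust end to end, which establishes the proposition at the level of a first-order scheme.

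The main obstacle is not the algebra but the error control in (iii): bounding $\big|\,\|\Log_{\bmu}\x_j-\pi_{S_\bu}\Log_{\bmu}\x_j\|_2 - d(\x_j,\pi_{\SubMan(S_\bu)}(\x_j))\,\big|$, together with the gap between the Karcher median and the true minimizer of the robustified reconstruction objective, requires curvature-dependent estimates that are only tight in a neighborhood whose radius depends on the sectional curvature and the data spread — the same caveat that accompanies \TPCA. Likewise, upgrading ``robust'' to a quantitative guarantee (a breakdown point, or an $\epsilon$-contamination bias bound) would force a commitment to a particular robustness functional; I would state the bounded-influence-function version, since it follows directly from the sublinearity of $r\mapsto r^{q}$ for $q<2$ and needs no extra hypotheses on $\Man$.
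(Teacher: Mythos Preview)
Your proposal is carefully argued, but it targets a statement that the paper does not actually prove. In the paper this ``proposition'' is purely declarative: it \emph{names} the $1\le q<2$ instances of \cref{dfn:pgagen} as RPGA and WPGA, and it \emph{defines} the tangent surrogates \TRPCA\ and \TWPCA\ by direct analogy with \TPCA\ (\cref{rem:TPCA}). No proof, proof sketch, or justification accompanies it anywhere in the main text or the supplement; the authors treat it on the same footing as the subsequent Dual PGA ``proposition'', which likewise merely introduces terminology.

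So there is no mismatch of technique to discuss---rather, you have supplied an argument where the paper supplies none. Your points (i)--(iii) are all reasonable as motivation (and your caution about curvature-dependent error control in (iii) is well placed), but none of this is required to establish the statement as the paper intends it: the content is definitional, and the word ``robust'' is used in the same informal sense as in \cref{dfn:pcagen}, not as a theorem with a breakdown-point or influence-function guarantee. If anything, the appropriate ``proof'' here is a one-line remark that the claim is immediate from the definitions and the \TPCA\ analogy.
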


\noindent We are now ready to formulate novel, dual versions of PGA.
\begin{prop}[Dual PGA (DPGA)]
    Given a dataset on a Riemannian manifold, we define dual robust principal directions, analogous to DPCA (\cref{dfn: genDualPCA}):
    \begin{align}\label{eq:dpga_min}
\vb_{i} &= \argmin_{\substack{ \| \vb \| = 1 \,,\, \vb \in S_{i-1}^\perp}} \mathbb{E}_j \left[ w_j d( \bmu, \pi_{H(S_\vb)} (\x_j))^q \right]\\
\vb_{i} &= \argmax_{\substack{ \| \vb \| = 1 \,,\, \vb \in S_{i-1}^\perp}} \mathbb{E}_j \left[ w_j d( \x_j, \pi_{H(S_\vb)} (\x_j))^q \right].\label{eq:dpga_max} 
\end{align}
We refer to these novel principal directions as \DPGP~ (\cref{eq:dpga_min}) and \WDPGP~ (\cref{eq:dpga_max}). Again, we can approximate these problems by performing DPCP and WDPCP in the tangent space, resulting in the tractable algorithms of tangent DPCP (\TDPCP) and tangent WDPCP (\TWDPCP).
\end{prop}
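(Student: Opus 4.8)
\noindent This proposition packages three assertions, which I would establish in sequence: \textbf{(a)} the per-iteration problems \cref{eq:dpga_min}--\cref{eq:dpga_max} are well-posed on a Riemannian $\Man$; \textbf{(b)} they specialize to the Euclidean dual formulations \cref{eq: mindpca_gen}--\cref{eq: maxdpca_gen}, which justifies labelling their solutions ``dual'' principal directions; and \textbf{(c)} the tangent-space surrogates \TDPCP{} and \TWDPCP{} are tractable. The overall plan is to unwind the definitions of $\pi_{H(S_\vb)}$, of $d(\cdot,\cdot)$ and of $\Exp_{\bmu}$, and then to invoke the Tangent-PCA reduction of \cref{rem:TPCA}. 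For \textbf{(a)}: fixing an iteration $i$, the feasible set $\{\vb : \|\vb\| = 1,\; \vb \in S_{i-1}^\perp\}$ is the intersection of the unit sphere of $T_{\bmu}\Man$ with a linear subspace, hence compact; the objective is a finite weighted mean of $q$-th powers of $d(\bmu,\pi_{H(S_\vb)}(\x_j))$ (resp.\ $d(\x_j,\pi_{H(S_\vb)}(\x_j))$), which are continuous in $\vb$ because $S_\vb\mapsto H(S_\vb)=\{\Exp_{\bmu}(\bv):\bv\in S_\vb\}$ varies continuously and, for samples in a geodesically convex ball about $\bmu$, the projection \cref{eq:projdef} is single-valued and continuous. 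A continuous function on a compact set attains its extrema, so $\vb_i$ exists; the same ball hypothesis makes the robust Karcher median that replaces \cref{eq: manifold_mean} well-defined, exactly as for \TRPCA{}/\TWPCA{}.

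For \textbf{(b)}, set $\Man=\R^n$ with the Euclidean metric: then $\bmu=0$ for centered data, $\Exp_{\bmu}$ is the identity embedding, $d$ is the Euclidean distance, $H(S_\vb)=S_\vb$, and $\pi_{H(S_\vb)}=\pi_{S_\vb}=\vb\vb^{T}$. Substituting these into \cref{eq:dpga_min}--\cref{eq:dpga_max} reproduces verbatim \cref{eq: mindpca_gen}--\cref{eq: maxdpca_gen}; with $q=1$, unit weights and $k=1$ one further recovers the $\|\X^{T}\vb\|$ objective of \cref{dfn:DPCA,dfn:DPCP} together with its $L_0$-relaxation interpretation. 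Hence \cref{eq:dpga_min} and \cref{eq:dpga_max} are the manifold lifts of DPCP and WDPCP respectively, which is the claimed analogy.

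For \textbf{(c)}, push the samples to the tangent space via $\y_j:=\Log_{\bmu}\x_j$ and apply the first-order Tangent-PCA approximations of \cref{rem:TPCA}, namely $d(\x_j,z)\approx\|\y_j-\Log_{\bmu}z\|$, $d(\bmu,z)\approx\|\Log_{\bmu}z\|$, and $\pi_{H(S_\vb)}(\x_j)\approx\Exp_{\bmu}(\pi_{S_\vb}\y_j)$. Under these substitutions \cref{eq:dpga_min}--\cref{eq:dpga_max} collapse to \cref{eq: mindpca_gen}--\cref{eq: maxdpca_gen} evaluated on $\{\y_j\}$, i.e.\ to DPCP and WDPCP on the logarithms; by \cref{dfn:DPCP} and the Stiefel-based solver that the paper develops for flag problems these are computable, and we call them \TDPCP{} and \TWDPCP{}. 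Nothing beyond routine substitution is needed here.

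The step I expect to be the real obstacle is making the approximations in \textbf{(c)} quantitative rather than heuristic. On a curved $\Man$ the geodesic projection onto $H(S_\vb)$ has no closed form and is not linear, so bounding the discrepancy between \DPGP{}/\WDPGP{} and their tangent surrogates requires curvature estimates together with the assumption that the data concentrate near $\bmu$ well inside the injectivity radius; and, as already noted for PGA, the variance-minimizing and residual-maximizing versions stay genuinely inequivalent on $\Man$, so one cannot shortcut by proving the two objectives equal. The continuity of $\pi_{H(S_\vb)}$ used in \textbf{(a)} is also delicate near the cut locus, but is handled by the same geodesically convex ball hypothesis.
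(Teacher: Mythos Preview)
The paper supplies no proof for this proposition: it is stated purely as a definition, introducing the dual principal directions by direct analogy with the Euclidean Dual-PCA generalizations of \cref{dfn: genDualPCA}, and simply naming the tangent-space approximations \TDPCP{} and \TWDPCP{} without further argument. Your decomposition into \textbf{(a)} well-posedness, \textbf{(b)} Euclidean specialization, and \textbf{(c)} tractability of the tangent surrogate therefore goes substantially beyond what the paper offers. The arguments you give are sound under the geodesically-convex-ball hypothesis you invoke, and the reduction in \textbf{(b)} to \cref{eq: mindpca_gen}--\cref{eq: maxdpca_gen} is a correct and useful sanity check that the paper omits. The caveat you raise at the end---that the tangent approximation is heuristic rather than quantitatively controlled---is precisely the status in which the paper leaves it: no error bound is claimed, only that \TDPCP{}/\TWDPCP{} ``approximate'' the manifold problems in the same spirit as Fletcher \etal's Tangent-PCA (\cref{rem:TPCA}). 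In short, there is no gap in your proposal, but you should be aware that you are supplying rigor the paper does not attempt; for the purposes of matching the paper, the proposition requires no proof at all.
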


\begin{remark}[Normalization]
    Classical DPCA works with datasets normalized to the unit sphere. Our tangent formulations of DPCP variations do not perform this preprocessing on the tangent space.
\end{remark}
{We summarize all the PCA methods as well as our extensions in~\cref{tab:pca_summary}.}

\begin{table}[t]
    \centering
    \begin{tabular}{@{\:}c@{\:}|@{\:}c@{\:}|@{\:}c@{\:}||@{\:}c@{\:}c@{\:}c@{\:}}
        & & $(p,q)$ & Variance & Rec. Err. & $\flag(\cdot;n)$\\
        \toprule
        \multirow{6}{*}{\rotatebox[origin=c]{90}{Euclidean}} & \multirow{3}{*}{\rotatebox[origin=c]{90}{Primal}} & $(2,2)$ & \PCA~\cite{hotelling1933analysis} & \PCA~\cite{hotelling1933analysis} & $(1,2,...,k)$\\
        & & $(2,1)$ &  \LtwoRPCA ~\cite{polyak2017robust} & \LtwoWPCA~\cite{ding2006r} & $(k)$\\
        & & $(1,1)$ & \LoneRPCA~\cite{markopoulos2014optimal} & \LoneWPCA~\cite{neumayer2020robust} & $(1,2,...,k)$\\
        \cmidrule{2-6}
        & \multirow{3}{*}{\rotatebox[origin=c]{90}{Dual}} & $(2,2)$ & \OPCA~\cite{vidal2018dpcp} & \OPCA~\cite{vidal2018dpcp} & $(1,2,...,k)$\\
        & & $(2,1)$ &  \LtwoDPCP~\cite{vidal2018dpcp} & \textcolor{blue}{\LtwoWDPCP} & $(k)$\\
        & & $(1,1)$ & \LoneDPCP~\cite{vidal2018dpcp} & \textcolor{blue}{\LoneWDPCP} & $(1,2,...,k)$\\
        \midrule
        \multirow{6}{*}{\rotatebox[origin=c]{90}{Manifold}} & \multirow{3}{*}{\rotatebox[origin=c]{90}{Primal}} & $(2,2)$ & \TPCA~\cite{fletcher2004principal} & \TPCA~\cite{fletcher2004principal} & $(1,2,...,k)$\\
        & & $(2,1)$ &  \textcolor{blue}{\LtwoTRPCA} & \textcolor{blue}{\LtwoTWPCA} & $(k)$\\
        & & $(1,1)$ & \textcolor{blue}{\LoneTRPCA} & \textcolor{blue}{\LoneTWPCA} & $(1,2,...,k)$\\
        \cmidrule{2-6}
        & \multirow{3}{*}{\rotatebox[origin=c]{90}{Dual}} & $(2,2)$ & \textcolor{blue}{\OTPCA} & \textcolor{blue}{\OTPCA} & $(1,2,...,k)$\\
        & & $(2,1)$ &  \textcolor{blue}{\LtwoTDPCP} & \textcolor{blue}{\LtwoTWDPCP} & $(k)$\\
        & & $(1,1)$ & \textcolor{blue}{\LoneTDPCP} & \textcolor{blue}{\LoneTWDPCP} & $(1,2,...,k)$\\
        \bottomrule
    \end{tabular}
    \caption{A summary of variants of \PCA, robust \PCA~and tangent \PCA. The new \PCA~variants introduced in this paper are highlighted in \textcolor{blue}{blue}. For robust variants of \PCA: optimizing over $\flag(1,2,\dots,k;n) = St(k,n)$ recovers $L_1$ and optimizing over $\flag(k;n) = Gr(k,n)$ recovers $L_2$ formulations. Optimizing for any other flag type will provide a collection of novel algorithms between $L_1$ and $L_2$ versions.}
    \label{tab:pca_summary}
\end{table}

\section{Flagifying PCA and Its Robust Variants}\vspace{-1mm}\label{sec:method}
We now re-interpret PCA in Euclidean spaces as an optimization on flags of linear subspaces. This flagification will later enable us to introduce more variants and algorithms.

\begin{dfn}[Flagified (weighted-)PCA (fPCA)~\cite{pennec2018barycentric}]
A (weighted-)flag of principal components is the solution to:
    \begin{equation}\label{eq:wflagpca}
         [\![\U]\!]^\star=\argmax_{[\![\U]\!] \in \flag(n+1)}  \mathbb{E}_j \left[ \sum_{i=1}^k w_{ij}\| \pi_{\U_i} (\x_j)\|_2^2 \right].
    \end{equation}
    where $w_{ij}$ denote the weights.
    We refer to a weighted flag PCA algorithm optimized over $\flag(n_1,n_2,\dots,n_k;n)$ as $\mathrm{weighted-fPCA}(n_1,n_2,\dots,n_k;n)$. When $w_{ij}=1 \,\forall i,j$, we recover $\mathrm{fPCA}$.
\end{dfn}

\begin{remark}
The solution to~\cref{eq: pca_opt_rot} is only unique up to rotation, and PCA is unique (up to column signs) because we order by eigenvalues. This ordering imposes a flag structure. Interpreting this optimization problem over a flag emphasizes the nested structure of principal subspaces~\cite{damon2014backwards} and provides a slight loosening of the strict eigenvalue ordering scheme from PCA. Also note that the joint optimization over the whole flag of subspaces (instead of optimizing each subspace independently) poses a computational challenge, preventing~\cite{pennec2018barycentric} from a practical implementation. This gap is filled via a manifold optimization in~\cite{ye2022optimization} by characterizing the Riemannian geometry of $\flag(\cdot)$. We provide further details in the supplementary.
\end{remark}

\noindent Building off of these, we now flagify the robust variants of PCA before introducing new dimensionality reduction algorithms and moving onto the principal geodesic.

\paragraph{Flagified Robust (Dual-)PCA variants}
To respect the nested structure of flags, we must embed the flag structure into the optimization problem. Generalized versions of robust PCA in~\cref{eq: flagRPCAmax} and Dual PCA in~\cref{eq: flagDPCAmin} change the objective function value and the space over which we optimize. 
We state these flagified formulations below.
\begin{dfn}[Flagified (Dual-)PCA]
In the sequel, we define flagified (f) \RPCA~/ \WPCA~/ \DPCP~/ \WDPCP:
\begin{align}\label{eq: flagRPCAmax}
&[\![\U]\!]^\star = \\
&\begin{cases}
        \argmax\limits_{[\![\U]\!] \in \flag(n+1)}  \mathbb{E}_j \left[ \sum_{i=1}^k\| \pi_{\U_i} (\x_j)\|_2 \right], & \mathrm{(fRPCA)}\\
        \argmin\limits_{[\![\U]\!] \in \flag(n+1)} \mathbb{E}_j \left[ \sum_{i=1}^k \| \x_j  -\pi_{\U_i} (\x_j)\|_2 \right], & \mathrm{(fWPCA)}
        \end{cases}\nonumber\\
&\nonumber\\
&[\![\B]\!]^\star = \label{eq: flagDPCAmin}\\
&\begin{cases}
        \argmin\limits_{[\![\B]\!] \in \flag(n+1)} \mathbb{E}_j \left[ \sum_{i=1}^k\| \pi_{\B_i} (\x_j)\|_2 \right], & \mathrm{(fDPCP)}\\
        \argmax\limits_{[\![\B]\!] \in \flag(n+1)} \mathbb{E}_j \left[ \sum_{i=1}^k \| \x_j  -\pi_{\B_i} (\x_j)\|_2 \right], & \mathrm{(fWDPCP)}
        \end{cases}\nonumber
\end{align}
where $\U_i$ and $\B_i$ as $\X_i$ is defined using~\cref{eq: Xi def}.
\end{dfn}
\begin{remark}
    Formulating these flagified robust PCAs over $\flag(1,2,\dots,k;n)$ recovers $L_1$ formulations and over $\flag(k;n)$ recovers $L_2$ of robust PCA and \DPCP~formulations. This fact is enforced in~\cref{tab:pca_summary}. 
\end{remark}
Inspired by Mankovich and Birdal~\cite{Mankovich_2023_ICCV}, we now show how to implement these robust variants by showing equivalent optimization problems on the Stiefel manifold~\cite{edelman1998geometry}.
We start by viewing weighted \fPCA~in~\cref{eq:wflagpca} as a Stiefel optimization problem in~\cref{prop:wfPCA}. For the rest of this section, we will slightly abuse notation and use $[\![\U ] \!]$ for flags of both primal  
 and dual principal directions, discarding $\B$. We will provide the necessary proofs in our supplementary material.
\begin{prop}[Stiefel optimization of (weighted) fPCA]\label{prop:wfPCA}
    Suppose we have weights $\{w_{ij}\}_{i=1,j=1}^{i=k,j=p}$ for a dataset $\{ \x_j \}_{j=1}^p \subset \R^n$ along with a flag type $(n_1,n_2,\dots, n_k; n)$. We store the weights in the diagonal weight matrices $\{\W_i\}_{i=1}^k$ with diagonals $(\W_i)_{jj} = w_{ij}$. If
    \begin{equation}\label{eq:wflag_pca}
        \U^\ast  = \argmax_{\U \in St(n_k,n)} \sum_{i=1}^k \tr \left( \U^T \X \W_i \X^T \U \I_i \right)
    \end{equation}
    where $\I_i$ is determined as a function of the flag signature. For example, for $\flag(n+1)$: 
    \begin{equation}\label{eq: Ii}
    (\I_i)_{l,s} = 
    \begin{cases}
        1, & l = s \in \{ n_{i-1} + 1, 
 n_{i-1} + 2, \dots, n_i\} \\
        0, &\mathrm{ otherwise}\nonumber\\
    \end{cases}    
\end{equation}
    Then $[\![\U^\ast ]\!] = [\![\U ]\!]^\ast$ is the weighted fPCA of the data with the given weights (e.g., solves~\cref{eq:wflagpca}) as long as we restrict ourselves to a region on $\flag(n+1)$ and $St(n_k,n)$ where (weighted)~\fPCA~is convex.
\end{prop}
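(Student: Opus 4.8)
The plan is to reduce the flag-level objective in \cref{eq:wflagpca} to the Stiefel-level objective in \cref{eq:wflag_pca} by two observations: first that the integrand decomposes block-wise along the flag, and second that the trace form with the selector matrices $\I_i$ exactly reassembles this block-wise sum. Concretely, I would start from the summand $\|\pi_{\U_i}(\x_j)\|_2^2 = \x_j^T \U_i \U_i^T \x_j = \tr(\U_i^T \x_j \x_j^T \U_i)$, using $\U_i^T \U_i = \I$ (which holds because $\U_i$ is a column-submatrix of a Stiefel point, hence has orthonormal columns). Taking the weighted expectation over $j$ and summing over $i$ gives $\mathbb{E}_j\big[\sum_i w_{ij}\tr(\U_i^T \x_j \x_j^T \U_i)\big] = \sum_i \tr\big(\U_i^T \X \W_i \X^T \U_i\big)$, where I have absorbed the $1/p$ from the expectation into the (real, hence freely rescalable) weights or simply noted that scaling the objective does not change the argmax. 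The remaining purely algebraic step is to check that $\tr(\U_i^T \X \W_i \X^T \U_i) = \tr(\U^T \X \W_i \X^T \U \,\I_i)$: writing $\U = [\X_1 \mid \cdots \mid \X_k]$ in block columns and using the definition of $\I_i$ in \cref{eq: Ii} as the diagonal indicator of the $i$-th block of column indices, the product $\U \I_i$ zeroes out every column of $\U$ outside the $i$-th block, so $\U^T M \U \I_i$ has the same trace as $\U_i^T M \U_i$ for any symmetric $M$. Summing over $i$ yields exactly \cref{eq:wflag_pca}.

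Next I would handle the passage between optimizing over $\flag(n+1)$ and optimizing over $St(n_k,n)$. The key fact, from the quotient description $\flag(n+1) = St(n_k,n)/(O(m_1)\times\cdots\times O(m_{k+1}))$ recalled in the preliminaries, is that the Stiefel objective in \cref{eq:wflag_pca} is invariant under the right action $\U \mapsto \U\,\mathrm{diag}(\Q_1,\dots,\Q_k)$ with $\Q_i \in O(m_i)$: each block $\X_i \mapsto \X_i \Q_i$ leaves $\tr(\U_i^T \X \W_i \X^T \U_i) = \tr(\Q_i^T \U_i^T \X \W_i \X^T \U_i \Q_i)$ unchanged by cyclicity of the trace and $\Q_i^T\Q_i = \I$. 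Hence the objective descends to a well-defined function on the flag manifold, and its value at $\U$ depends only on the flag $[\![\U]\!]$; in particular it agrees with the flag objective in \cref{eq:wflagpca} term by term since $\pi_{\U_i}$ depends only on the subspace $[\X_1,\dots,\X_i]$-complement structure, i.e. only on $[\![\U]\!]$. Therefore a maximizer $\U^\ast$ of \cref{eq:wflag_pca} projects to a maximizer $[\![\U^\ast]\!]$ of \cref{eq:wflagpca}, and conversely any flag maximizer lifts to a Stiefel maximizer.

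Finally, the convexity caveat: globally \cref{eq:wflag_pca} is a nonconvex problem (the feasible set is a Stiefel manifold), so the equivalence of maximizers is only guaranteed locally — hence the hypothesis restricting to a region of $\flag(n+1)$ and $St(n_k,n)$ on which weighted fPCA is convex (more precisely, geodesically convex, so that there is a unique local = global maximizer and the lift/descent is unambiguous). On such a region the argmax on either side is a single point, and the block-trace identity plus the $O(m_i)$-invariance force these two points to correspond under the quotient map. I would state this last matching carefully: the descent map $St \to \flag$ sends the Stiefel argmax to a flag argmax because the objectives agree pointwise, and on the convex region injectivity of "argmax" on both sides upgrades this to the claimed identity $[\![\U^\ast]\!] = [\![\U]\!]^\ast$.

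I expect the main obstacle to be making the convexity hypothesis do exactly the work needed without over- or under-claiming: the block-trace algebra and the $O(m_i)$-invariance are routine, but one must be precise that outside a geodesically convex neighborhood the Stiefel problem can have spurious critical points that do not descend to flag maximizers, and that the selector-matrix construction $\I_i$ genuinely depends on the flag \emph{type} (not just $k$) — so the identity \cref{eq:wflag_pca} must be verified for the general flag signature $(n_1,\dots,n_k;n)$, with the displayed $\I_i$ being only the special case, and the block sizes $m_i = n_i - n_{i-1}$ entering through both $\I_i$ and the invariance group. A secondary subtlety is confirming that $\pi_{\U_i}$ as used in \cref{eq:wflagpca} is the projector onto $[\X_i]$ (the $i$-th block) rather than onto the cumulative span $[\X_1,\dots,\X_i]$; I would check against \cref{eq: Xi def} and the surrounding text so that the block-diagonal form of the objective — and hence the single selector $\I_i$ per term — is the correct reading.
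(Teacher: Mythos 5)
Your proposal is correct and follows essentially the same route as the paper's proof: the block-trace identity $\tr(\U_i^T\X\W_i\X^T\U_i)=\tr(\U^T\X\W_i\X^T\U\,\I_i)$ to equate the flag and Stiefel objectives, invariance under the block-diagonal $O(m_1)\times\cdots\times O(m_k)$ action so the objective descends to the flag manifold, and the convexity hypothesis to identify the two argmaxes. The subtleties you flag at the end (the dependence of $\I_i$ on the full signature, and reading $\pi_{\U_i}$ as the projector onto the $i$-th block $[\X_i]$ rather than the cumulative span) are resolved exactly as you anticipate in the paper's argument.
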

\begin{proof}[Sketch of the proof]
Our proof, whose details are in the supp. material, closely follows~\cite{Mankovich_2023_ICCV}.
\end{proof}
We propose an algorithm for finding (weighted)~\fPCA~using Stiefel Conjugate Gradient Descent (Stiefel-CGD)~\cite{hager2006survey,sato2022riemannian} in the supplementary.

Next, we translate the flagified robust PCA optimizations in~\cref{eq: flagRPCAmax,eq: flagDPCAmin} to problems over the Stiefel manifold with diagonal weight matrices
    \begin{align}\label{eq: max_weights}
    (\mathbf{W}_i^+([\![\U]\!]))_{jj}  &= \max \left\{ \| \U \I_i \U^T\x_j\|_2, \epsilon \right\}^{-1},\\
    (\mathbf{W}_i^-([\![\U]\!]))_{jj} &= \max \left\{ \|\x_j -\U \I_i \U^T\x_j\|_2, \epsilon \right\}^{-1},\label{eq: min_weights}
\end{align}
chosen according to the robust \fPCA~optimization of concern, as outlined in~\cref{tab:weight_org}.
\begin{table}[t]
    \centering
    \begin{tabular}{c||c|c}
        \PCA~Variant & \fRPCA~/ \fDPCP & \fWPCA~/ \fWDPCP\\
        \midrule
        Weight & $\mathbf{W}_i^+$ from~\cref{eq: max_weights} & $\mathbf{W}_i^-$ from~\cref{eq: min_weights}
    \end{tabular}
    \caption{Weight matrix assignment according to the flagified robust PCA formulation.\vspace{-3mm}}
    \label{tab:weight_org}
\end{table}

\begin{prop}[Stiefel optimization for flagified Robust (Dual-)PCAs]\label{prop:fPCA_all}
   We can formulate \fRPCA, \fWPCA, \fDPCP, and \fWDPCP~as optimization problems over the Stiefel manifold using $[\![ \U ] \!]^\ast = [\![\U^\ast] \!]$ and the following: 
    \begin{align}\label{eq: stRPCAmax}
    &\U^\star = \\
    &\begin{cases}
            \argmax\limits_{\U \in St(n, n_k)}  \sum_{i=1}^k \tr \left(  \U^T \Pm_i^+\U \I_i  \right), & \mathrm{(fRPCA)}\\
            \argmin\limits_{\U \in St(n, n_k)}\sum_{i=1}^k \tr \left(\Pm_i^- -  \U^T \Pm_i^-\U \I_i  \right), & \mathrm{(fWPCA)}
            \end{cases}\nonumber\\
    &\nonumber\\
    &\U^\star = \label{eq: stDPCAmin}\\
    &\begin{cases}
            \argmin\limits_{\U \in St(n, n_k)} \sum_{i=1}^k \tr \left(  \U^T \Pm_i^+ \U \I_i  \right), & \mathrm{(fDPCP)}\\
            \argmax\limits_{\U \in St(n, n_k)} \sum_{i=1}^k\tr \left(\Pm_i^- -  \U^T \Pm_i^-\U \I_i  \right) & \mathrm{(fWDPCP)}
            \end{cases}\nonumber
    \end{align}
    where $\Pm^-=\X \W^-_i([\![\U]\!]) \X^T$, $\Pm^+=\X \W^+_i([\![\U]\!]) \X^T$ and $\W^-_i([\![\U]\!])$, $\W^+_i([\![\U]\!])$ are defined in~\cref{tab:weight_org} as long as we restrict ourselves to a region on $\flag(n+1)$ and $St(n_k,n)$ where flag robust and dual PCAs are convex.
\end{prop}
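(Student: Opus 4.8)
The plan is to reduce each of the four flagified robust/dual PCA problems to an iteratively-reweighted least squares (IRLS) form and then invoke Proposition~\ref{prop:wfPCA} on the reweighted quadratic. First I would fix a flag $[\![\U]\!] \in \flag(n+1)$ and examine the objective of \fRPCA, $\mathbb{E}_j\big[\sum_{i=1}^k \|\pi_{\U_i}(\x_j)\|_2\big]$. Writing $\pi_{\U_i}(\x_j) = \U\I_i\U^T\x_j$ (which holds because $\I_i$ selects exactly the columns of $\U$ spanning $[\X_i]$ in the sense of~\cref{eq: Xi def}), each summand is $\|\U\I_i\U^T\x_j\|_2$. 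The elementary identity $\|\bv\|_2 = \min_{w>0}\tfrac12\big(w\|\bv\|_2^2 + w^{-1}\big)$, minimized at $w = \|\bv\|_2^{-1}$, lets me replace $\|\U\I_i\U^T\x_j\|_2$ by a quadratic in $\U\I_i\U^T\x_j$ with the optimal weight $(\W_i^+)_{jj} = \|\U\I_i\U^T\x_j\|_2^{-1}$ (the $\max\{\cdot,\epsilon\}$ truncation in~\cref{eq: max_weights} is just the standard numerical safeguard and can be folded in at the end). Then $\sum_j (\W_i^+)_{jj}\|\U\I_i\U^T\x_j\|_2^2 = \tr(\I_i\U^T\X\W_i^+\X^T\U\I_i) = \tr(\U^T\Pm_i^+\U\I_i)$, using $\I_i^2 = \I_i$ and cyclicity of the trace. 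Summing over $i$ gives exactly the \fRPCA~objective in~\cref{eq: stRPCAmax}, and since at a fixed point the weights $\W_i^+$ are themselves functions of $[\![\U]\!]$, a maximizer of the flag objective is a maximizer of~\cref{eq: stRPCAmax} with the self-consistent weights, on any region where the problem is convex so the reweighting iteration and the original problem share optima.

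Next I would handle \fWPCA, whose objective is $\mathbb{E}_j\big[\sum_i \|\x_j - \U\I_i\U^T\x_j\|_2\big]$. The same IRLS identity with weight $(\W_i^-)_{jj} = \|\x_j - \U\I_i\U^T\x_j\|_2^{-1}$ turns each term into $(\W_i^-)_{jj}\|\x_j - \U\I_i\U^T\x_j\|_2^2$. Expanding the square and using $\|\x_j\|_2^2 - 2\x_j^T\U\I_i\U^T\x_j + \x_j^T\U\I_i\U^T\U\I_i\U^T\x_j = \x_j^T\x_j - \x_j^T\U\I_i\U^T\x_j$ (again $\I_i^2=\I_i$, $\U^T\U=\I$), the weighted sum over $j$ becomes $\tr(\Pm_i^-) - \tr(\U^T\Pm_i^-\U\I_i)$, which is the \fWPCA~expression. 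Minimizing the reconstruction error corresponds to minimizing this trace expression, matching~\cref{eq: stRPCAmax}. The dual cases \fDPCP~and \fWDPCP~are then immediate: their flag objectives in~\cref{eq: flagDPCAmin} are the \emph{same} functionals as \fRPCA~and \fWPCA~but with the optimization sense flipped (min instead of max for the variance form, max instead of min for the reconstruction form), so the identical algebra yields~\cref{eq: stDPCAmin} with the min/max swapped, as stated.

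The main obstacle is not the algebra — that is routine trace manipulation — but justifying the equivalence of the \emph{flag} optimization domain with the \emph{Stiefel} domain $St(n_k,n)$, and the equivalence of the reweighted problem with the original non-smooth one. For the first point I would appeal to Proposition~\ref{prop:wfPCA}, which already establishes (following~\cite{Mankovich_2023_ICCV}) that optimizing $\sum_i \tr(\U^T\X\W_i\X^T\U\I_i)$ over $St(n_k,n)$ descends to the correct flag objective over $\flag(n+1)$, because the $O(m_1)\times\cdots\times O(m_{k+1})$ gauge freedom leaves each $\tr(\cdot\,\I_i)$ block invariant; here the weights $\W_i^\pm$ are likewise gauge-invariant since $\|\U\I_i\U^T\x_j\|_2$ depends only on the subspace $[\X_i]$, so the same descent argument applies verbatim. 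For the second point, the IRLS/half-quadratic identity shows the reweighted objective is a tight \emph{majorant} (or minorant) of the original at the current iterate, and standard convexity on the restricted region guarantees that stationary points of the reweighted problem with self-consistent weights coincide with stationary points of the original — this is exactly the ``restrict to a convex region'' hypothesis in the statement, so I would state the equivalence there rather than re-derive convergence of IRLS, deferring the full convergence analysis (for the dual $L_2$ case) to the later section of the paper. The detailed version of all four computations goes in the supplementary material, mirroring the proof of Proposition~\ref{prop:wfPCA}.
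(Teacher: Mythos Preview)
Your proposal is correct and follows essentially the same route as the paper: both rewrite each $\|\U\I_i\U^T\x_j\|_2$ (resp.\ $\|\x_j - \U\I_i\U^T\x_j\|_2$) as a self-weighted squared norm, reduce to the trace form via $\I_i^2=\I_i$ and cyclicity, and then invoke Proposition~\ref{prop:wfPCA} for the flag/Stiefel descent. The only cosmetic difference is that the paper uses the bare identity $a = a^2/a$ in place of your half-quadratic majorization and then appends a Lagrangian first-order-condition check to select the max/min branch; your IRLS framing accomplishes the same thing and has the mild bonus of already setting up the majorant/minorant structure used in Proposition~\ref{cor: fdpcp converges}.
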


~\cref{eq: stRPCAmax,eq: stDPCAmin} offer natural iterative re-weighted optimization schemes on the Stiefel manifold for obtaining flagified robust PCA variants, where we calculate a weighted flagified PCA at each iteration with weights defined in~\cref{tab:weight_org}. This is similar to~\cite{busam2017camera}. We summarize these algorithms in~\cref{alg: flagified pca}.
We further establish the convergence guarantee for the case of \fDPCP~\cref{cor: fdpcp converges} and leave other convergence results to future work. The assumption in our convergence guarantee is realistic because in the presence of real-world, noisy data, we cannot expect to recover dual principal directions that are perfectly orthogonal to the inlier data points.  We leave dropping this assumption, leveraging optimizing our algorithm, and more advanced proof techniques similar to those in~\cite{aftab2015convergence, peng2023convergence} for a future study.

\begin{prop}[Convergence of \cref{alg: flagified pca} for \fDPCP]\label{cor: fdpcp converges}
    \cref{alg: flagified pca} for \fDPCP~converges as long as $\| \U \I_i \U^T \x_j\|_2 \geq \epsilon$ $\forall i,j$ as long as we restrict ourselves to a region on $\flag(n+1)$ and $St(n_k,n)$ where \fDPCP~is convex.
\end{prop}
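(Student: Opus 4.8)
The plan is to recognize Algorithm~\ref{alg: flagified pca} specialized to \fDPCP~as an instance of Iteratively Reweighted Least Squares (IRLS) and to exhibit a single auxiliary (majorizing) functional that is monotonically decreased by each iteration, so that the sequence of objective values converges. Concretely, by \cref{prop:fPCA_all}, the \fDPCP~iteration alternates between (i) forming the weights $(\W_i^+([\![\U]\!]))_{jj} = \max\{\|\U\I_i\U^T\x_j\|_2,\epsilon\}^{-1}$ at the current iterate, and (ii) solving the weighted \fPCA~subproblem $\U^{(t+1)} = \argmin_{\U \in St(n_k,n)} \sum_{i=1}^k \tr(\U^T \Pm_i^+ \U \I_i)$ with $\Pm_i^+ = \X\W_i^+([\![\U^{(t)}]\!])\X^T$. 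The crux is to define, for a fixed reference flag $[\![\V]\!]$, the surrogate
\begin{equation*}
Q([\![\U]\!], [\![\V]\!]) = \mathbb{E}_j\left[\sum_{i=1}^k \frac{\|\pi_{\U_i}(\x_j)\|_2^2 + \|\pi_{\V_i}(\x_j)\|_2^2}{2\max\{\|\pi_{\V_i}(\x_j)\|_2,\epsilon\}}\right],
\end{equation*}
which is exactly (up to the $\epsilon$-flooring) the quadratic tangent to the concave function $t \mapsto \sqrt{t}$ at $t = \|\pi_{\V_i}(\x_j)\|_2^2$. The key inequality is that $\sqrt{a} \le (a+b)/(2\sqrt{b})$ for $a,b > 0$, with equality at $a=b$; this is the standard IRLS majorization and gives $F([\![\U]\!]) \le Q([\![\U]\!],[\![\V]\!])$ and $F([\![\V]\!]) = Q([\![\V]\!],[\![\V]\!])$, where $F$ is the \fDPCP~objective $\mathbb{E}_j[\sum_i \|\pi_{\U_i}(\x_j)\|_2]$ (the $\epsilon$-floor only makes the bound loser, which is harmless).

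From here the argument is the usual majorize-minimize chain. First I would show that step (ii) of the algorithm is precisely $\U^{(t+1)} \in \argmin_{[\![\U]\!]} Q([\![\U]\!],[\![\U^{(t)}]\!])$: expanding $\|\pi_{\U_i}(\x_j)\|_2^2 = \tr(\U^T \x_j\x_j^T \U \I_i)$, summing over $j$ with the weights $w_{ij} = \max\{\|\pi_{\U_i^{(t)}}(\x_j)\|_2,\epsilon\}^{-1}$, and dropping the $\U$-independent term $\sum_i\|\pi_{\V_i}(\x_j)\|_2^2/(2\max\{\cdots\})$ recovers exactly the weighted \fPCA~problem $\min_\U \sum_i \tr(\U^T \Pm_i^+ \U \I_i)$ solved at each iteration, which is legitimate by \cref{prop:wfPCA}. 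Then the standard sandwich
\begin{equation*}
F([\![\U^{(t+1)}]\!]) \le Q([\![\U^{(t+1)}]\!],[\![\U^{(t)}]\!]) \le Q([\![\U^{(t)}]\!],[\![\U^{(t)}]\!]) = F([\![\U^{(t)}]\!])
\end{equation*}
shows $\{F([\![\U^{(t)}]\!])\}_t$ is nonincreasing; since $F \ge 0$ it is bounded below, hence convergent. This is the convergence claimed in the statement. If one wants convergence of the iterates (not just the values), I would add: the assumption $\|\U\I_i\U^T\x_j\|_2 \ge \epsilon$ ensures all weights stay bounded ($\le \epsilon^{-1}$) and the reweighting map is continuous on the compact Stiefel manifold, so by standard point-set arguments (e.g., Zangwill / the convexity restriction hypothesis) accumulation points are fixed points of the MM map, i.e., stationary points of $F$; under the stated local convexity restriction the minimizer of each subproblem is unique and the limit is the \fDPCP~solution.

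The main obstacle is bookkeeping rather than conceptual: I must verify carefully that the weighted-\fPCA~subproblem solved in step (ii) is the exact minimizer of $Q(\cdot,[\![\U^{(t)}]\!])$ for \emph{every} admissible flag type — i.e., that the matrix $\I_i$ from \cref{eq: Ii} correctly encodes ``project onto the $i$-th block of the flag'' so that $\sum_i \tr(\U^T\Pm_i^+\U\I_i)$ genuinely equals $\sum_j\sum_i \|\pi_{\U_i}(\x_j)\|_2^2 / (\cdots)$ and not some rotated surrogate. This is where the $\flag(n+1)$-versus-$St(n_k,n)$ distinction and the local-convexity caveat in the hypothesis do real work: off the convex region the Stiefel minimizer and the flag minimizer can disagree, and $Q$ need not be minimized by step (ii). Granting the convexity restriction (exactly as assumed), this identification goes through, and the MM sandwich closes the proof; the $\epsilon$-flooring requires only the trivial observation that flooring a denominator in $Q$ preserves the majorization inequality $F \le Q$ (it weakens it) while keeping $Q([\![\V]\!],[\![\V]\!]) \ge F([\![\V]\!])$ with equality once $\|\pi_{\V_i}(\x_j)\|_2 \ge \epsilon$, which is precisely the stated hypothesis.
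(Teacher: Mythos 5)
Your proposal is correct and is essentially the paper's own argument: the paper's auxiliary function $h([\![\Z]\!],[\![\U]\!])=\sum_{i,j}\|\Z\I_i\Z^T\x_j\|_2^2/\|\U\I_i\U^T\x_j\|_2$ is just $2Q-F([\![\U]\!])$ in your notation, and the paper's key identity $a^2/b\ge 2a-b$ is your majorization $\sqrt{a}\le(a+b)/(2\sqrt{b})$ in disguise, so both arguments reduce to the same MM sandwich yielding a nonincreasing, bounded-below sequence of objective values. The only difference is cosmetic packaging (your remarks on convergence of the iterates via Zangwill-type arguments go beyond what the paper actually establishes, which is convergence of the objective values only).
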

\begin{proof}[Sketch of the proof]
 Similar to~\cite{beck2015weiszfeld,Mankovich_2023_ICCV,mankovich2023subspace}, we first show that an iteration of~\cref{alg: flagified pca} decreases the \fDPCP~objective value and then convergence follows easily.
\end{proof}

\begin{algorithm}[t]
\setstretch{1.13}
\caption{\fRPCA, \fWPCA, \fDPCP}\label{alg: flagified pca}
 \textbf{Input}: {Data $\{\x_j \in \R^n\}_{j=1}^p $, flag type $(n+1)$, $\epsilon>0$}\\
 \textbf{Output}: Flagified principal directions $[\![\U]\!]^\ast $ \\[0.25em] 
 Initialize $[\![\U]\!]$\\
 \While{(not converged)}
 {
     \vspace{.1cm}
    \textbf{Assign weights:}\\
    \Case{fRPCA or fDPCP}{
        Assign $\{\mathbf{W}_i^+([\![\U]\!])\}_{i=1}^k$ using~\cref{eq: max_weights}
            }
    \Case{fWPCA}{
        Assign $\{\mathbf{W}_i^-([\![\U]\!])\}_{i=1}^k$  using~\cref{eq: min_weights}
            }
    \vspace{.1cm}
    \textbf{Update estimate:}\\
    \Case{fRPCA} {
            $\mathbf{A}^+ \gets \sum_{i=1}^k  \I_i \U^T \X \W_i^+ \X^T$ \\
            $\U \gets \argmax_{\Z \in St(k,n)} \mathbf{A}^+ ( [\![ \U ]\!] ) \Z$
           }
    \Case{fWPCA}{
            $\mathbf{A}^- \gets \sum_{i=1}^k  \I_i \U^T \X \W_i^- \X^T$ \\
            $\U \gets \argmin_{\Z \in St(k,n)} \mathbf{A}^- ( [\![ \U ]\!] ) \Z$
            }
    \Case{fDPCP}{
        Assign $\U$ using~\cref{eq: stDPCAmin} with $\{\mathbf{W}_i^+([\![\U]\!])\}$
           }
}
$[\![\U]\!]^\ast \gets [\![\U]\!]$
\end{algorithm}
\vspace{-4mm}

\begin{remark}[Flagifying PGA or Tangent-PCA]
    True flagification of exact PGA is a difficult task. 
    While~\cref{eq: tangent_flag} resembles a flag structure, this is not the case for the nonlinear submanifolds in~\cref{eq:subman_flag}~\cite{haller2020nonlinear,haller2023weighted,ciuclea2023shape}.
    Instead, we will focus on its tangent approximations, where we map the data to the tangent space of the mean and perform (weighted) \fPCA~along with \fRPCA, \fWPCA, and \fDPCP~in the tangent space, just like in \TPCA~(\cref{rem:TPCA}).
\end{remark}

\begin{remark}[Computational Complexity (CC)]
    \cref{alg: flagified pca} for \fRPCA, \fWPCA, and \fDPCP~ has a CC of $O(N_oM)$ and~\cref{alg: flagified pca} in $\TxM$ for \fTRPCA, \fTWPCA, and \fTDPCP~has a CC of $O(N_{\mu}pn n_k^2) + O(N_oM)$, where $N_o$ is the number of iterations of the outer loop, $N_{\mu}$ is the number of iterations of the Karcher median, $M$ is the CC of Stiefel CGD, $p$ is the number of points, and the flag is of type $(n_1,n_2,\dots,n_k;n)$.
\end{remark}

\begin{remark}[Flagifying Tangent (Dual)-PCA]
All these flagified PCA formulations can be run in the tangent space of a manifold centroid, producing a corresponding \emph{tangent} version. Following the same convention, we dub these \fTPCA, \fTRPCA, \fTWPCA, and \fTDPCP~and propose an algorithm for their computation in the supplementary.
\end{remark}
\setlength{\textfloatsep}{8pt}
\begin{remark}[Even further fPCA variants]
As summarized in~\cref{tab:fl_type}, optimizing over $\flag(1,2,\dots,k;n)$ recovers $L_1$ versions of robust PCA while $\flag(k;n)$ recovers $L_2$ versions, in particular \RPCA, \WPCA, and \DPCP.
Naturally, one can employ other flag types recovering robust PCAs ``in between'' $L_1$ and $L_2$ that differ from $L_p$, $1<p<2$. Moreover, these flagified PCA formulations can be run in the tangent space of a manifold centroid to recover tangent robust $L_1$ and $L_2$ principal directions, and even ones in between. These generalizations immediately produce a plethora of novel dimensionality reduction algorithms. 
While we glimpse their potential advantages in~\cref{sec:eucPD}, we leave their thorough investigation for a future study.
\end{remark}

\vspace{-3mm}\section{Results}\label{sec:results}
\vspace{-3mm}
\paragraph{Baselines}
Our algorithm results in a family of novel \PCA/\TPCA~algorithms (\cf~\cref{tab:pca_summary} in blue). We compare these to their known versions using state-of-the-art implementations. In particular, we use the bit-flipping algorithm of~\cite{markopoulos2017efficient} for \LoneRPCA, the alternating scheme of~\cite{wang2017ell} for \LtwoWPCA, and the iteratively reweighted algorithm of~\cite{vidal2018dpcp} (DPCP-IRLS) for \LtwoDPCP. Finally, we use the Pymanopt~\cite{townsend2016pymanopt} implementations for Stiefel CGD and Riemannian Trust Region (RTR) methods on flag manifolds~\cite{nguyen2023operator} to directly optimize the objectives in~\cref{prop:fPCA_all}. 

\paragraph{Implementation details}
We always initialize~\cref{alg: flagified pca} randomly and determine convergence either if we reach a maximum number of iterations (max. iters. of 50) or meet at least one of $\left| f([\![\U^{(m)}]\!]) - f([\![\U^{(m+1)}]\!]) \right| < 10^{-9}$ or $d_c([\![\U^{(m)}]\!],[\![\U^{(m+1)}]\!]) < 10^{-9}$ where $d_c(\cdot, \cdot)$ is the chordal distance on $\flag(n+1)$~\cite{pitaval2013flag}. Karcher's mean/median convergence parameter is $10^{-8}$, and step size is $0.05$. 
All algorithms are run on a $2020$ M$1$ MacBook Pro.

\paragraph{Outlier detection}
Euclidean formulations of PCA yield the residuals of $\|\x_j - \U \U^T \x_j \|_2$ (for \WPCA, \RPCA, and \PCA) and $\| \B \x_j\|_2$ (for \DPCP). To predict labels for outliers, we normalize these residuals between $[0,1]$ and decide on a threshold during AUC computation. Non-Euclidean versions, (\fTWPCA, \fTRPCA, and \TPCA), given $k$ flattened principal directions $\U = [\bu_1,\dots, \bu_k]$ at a base point $\x \in \Man$, we compute $\pi_{\U}(x_j) = \U \U^T \x_j$ and reshape it into $\bv_j \in \TxM$.
The predicted label is then obtained from the reconstruction error for $\x_j$ by thresholding the manifold distance $d(\x_j, \hat{\x}_j)$, where $\hat{\x}_j = \Exp_{\x}(\bv_j)$. The predictions for \fTDPCP~follows a slightly different scheme which uses flattened estimations for the dual principal directions in $\B = [\vb_1, \dots, \vb_k]$ and the data in the tangent space $\{\bv_j\}$. The predicted label for point $j$ is obtained by thresholding $\| \B \bv_j\|_2$.

\subsection{Evaluating Euclidean Principal Directions}
\label{sec:eucPD}
\noindent\textbf{Can flagified PCAs recover specified algorithms?}
To ensure that our robust algorithms in~\cref{alg: flagified pca} can recover traditional, specific PCA variants, we compare \foneRPCA~to \LoneRPCA, \ftwoWPCA~to \LtwoWPCA, and \ftwoDPCP~to \LtwoDPCP~in~\cref{tab:compare} (with $200$ max. iters.) by computing the first $k=2$ principal directions of $\{ \x_i\}_{i=1}^{100} \in \R^{5}$ where $\x_i\sim\mathcal{U}[0,1)$ is sampled uniformly.
As seen, our algorithms converge to similar objective values as the baselines 
while \LtwoWPCA~and \LtwoDPCP~run faster than the flag versions. Yet, the novel \foneRPCA~is much faster than \LoneRPCA~and \ftwoDPCP~converges to a more optimal objective albeit being initialized randomly, as opposed to SVD-initialization of \LtwoDPCP.

\begin{table}[t!]
    \centering
    \begin{tabular}{l@{\:}c@{\:}|@{\:}c|c@{\:}|@{\:}c|c@{\:}|@{\:}c}
        & \multicolumn{2}{c}{\LoneRPCA} & \multicolumn{2}{c}{\LtwoWPCA} & \multicolumn{2}{c}{\LtwoDPCP} \\
        \cmidrule{2-7}
        & Obj.$\uparrow$ & Time & Obj.$\downarrow$ & Time & Obj.$\downarrow$ & Time \\
        \midrule
        Baseline\,\,\, & $\mathbf{54.69}$ & $70.66$ & $\mathbf{42.89}$ & $\mathbf{0.24}$ & $34.83$ & $\mathbf{0.26}$\\
        Flag (\cref{alg: flagified pca}) & $54.66$ & $\mathbf{0.19}$ & $42.92$ & $0.45$ & $\mathbf{34.66}$ & $0.38$\\
        \bottomrule
    \end{tabular}
    \caption{Objective function values and run times comparing \foneRPCA/ \ftwoWPCA/ \ftwoDPCP~found with~\cref{alg: flagified pca} to baselines \LoneRPCA/ \LtwoWPCA/ \LtwoDPCP~respectively.\vspace{-1mm}}
    \label{tab:compare}
\end{table}

\vspace{0.3mm}

\noindent\textbf{Is our algorithm advantageous to direct optimization on manifolds?}
We compare~\cref{alg: flagified pca} to direct optimization with Stiefel CGD and Flag RTR on data $\{ \x_i\}_{i=1}^{30} \in \R^{4}$ where $\x_i \sim \mathcal{U}[0,1)$.~\cref{fig:pymanopt_compare_fl} presents run times and objective values attained when computing the first $k=2$ principal directions via \foneRPCA, \foneWPCA, and \foneDPCP~with $20$ random initializations. Our algorithms converge faster and to more optimal objective values than naive Stiefel-CGD and Flag-RTR.

\begin{figure}
    \centering\vspace{-3mm}
    \includegraphics[trim=0mm 2.8mm 0mm 1mm,clip,width = \linewidth]{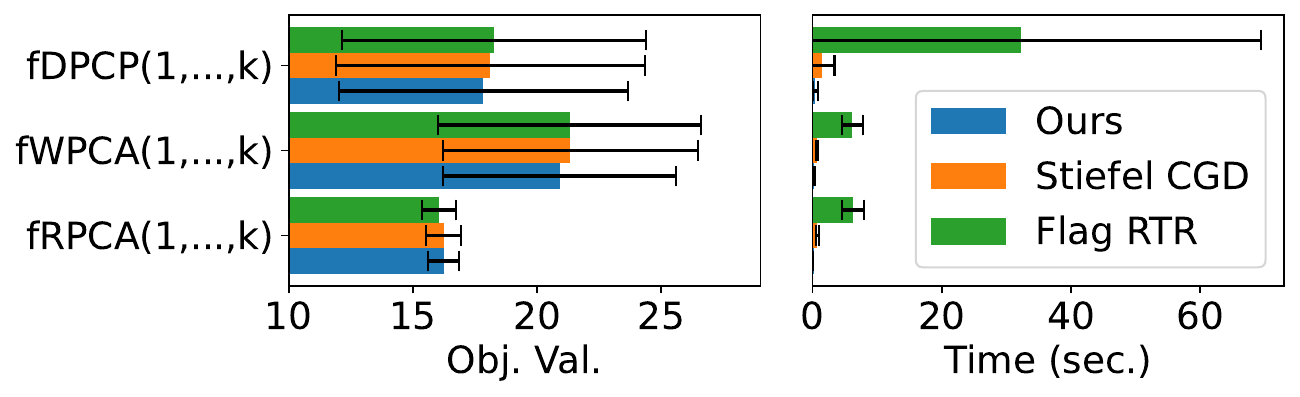}
    \caption{\cref{alg: flagified pca} converges faster to more optimal cost values compared to Stiefel CGD or Flag RTR.\vspace{-3mm}}
    \label{fig:pymanopt_compare_fl}
\end{figure}

\paragraph{Outlier detection on remote sensing data}
We use the UCMercedLandUseDataset~\cite{yang2010bag} with $100$ inlier `runway' and introduce outlier `mobilehomepark' images. We use the benchmark RPCA by Candès~\etal~\cite{candes2011robust}. Results in~\cref{fig:images} {\textcolor{red}{(top)}} indicate a slight yet consistent increase in performance using novel robust \fDPCP$(1,40)$.

\paragraph{Outlier detection on Cropped YaleFaceDB-B}
Similar to DPCP~\cite{Tsakiris_2015_ICCV_Workshops}, we use the $64$ illuminations of one face from YaleFaceDB-B~\cite{cropped_yaleb} as inliers and introduce outliers as random images from Caltech101~\cite{caltech101}. Results in~\cref{fig:images} {\textcolor{red}{(bottom)}} indicate that our robust flag methods are advantageous and the dual variant dominates as the outlier contamination increases.

\begin{figure}[t!]
    \begin{subfigure}[b]{\columnwidth}
        \includegraphics[width = \linewidth]{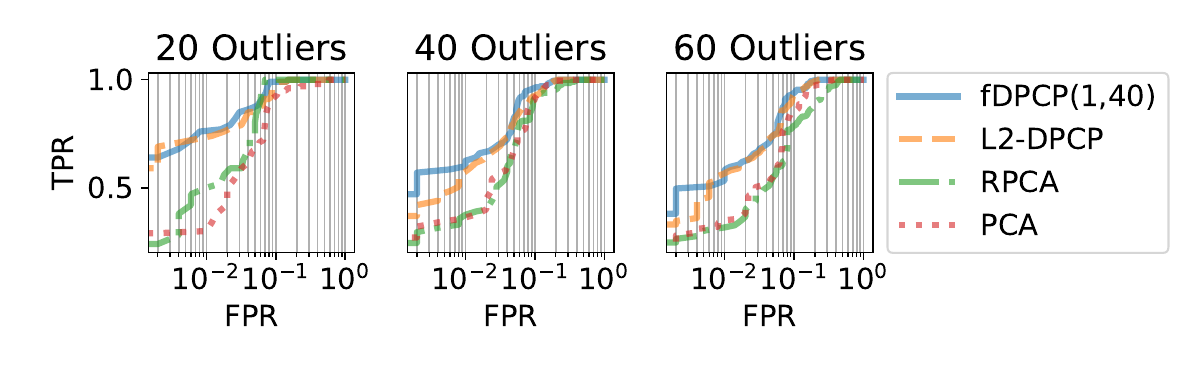}
    \end{subfigure}
    \hfill
    \begin{subfigure}[b]{\columnwidth}
    \vspace{-2mm}
        \includegraphics[width = \linewidth]{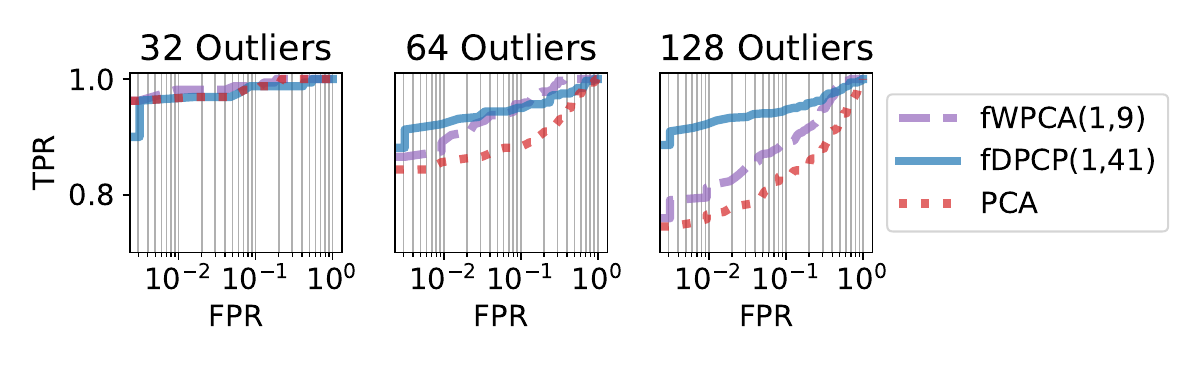}\vspace{-3mm}
    \end{subfigure}
    \caption{Average ROC curves over five trials of outlier samples for UCMercedLandUse (\textbf{top}) and YaleFaceDB-B (\textbf{bottom}). All data is reshaped and projected to $\R^{50}$ before outlier detection.}
    \label{fig:images}
\end{figure}

\vspace{-2mm}
\subsection{Evaluating Non-Euclidean Extensions}\label{sec: exp-noneuc}
\vspace{-2mm}
We now evaluate flagified tangent-PCA and its robust variants starting with a synthetic evaluation of the sphere and Grassmannian before moving to real datasets. See the supplementary for additional experiments.

\paragraph{Convergence on $4$-sphere}
To sample a dataset of inliers and outliers on the $4$-sphere $\mathbb{S}^4 = \left \{ \x \in \R^5 : \| \x \|_2 = 1 \right\}$ (see supplementary for details). Then we compute the first $k=2$ principal directions of \foneTRPCA, \foneTWPCA, and \foneTDPCP~and plot objective values as Euclidean optimizations in the tangent space of the Karcher median at each iteration of~\cref{alg: flagified pca} in~\cref{fig: all_convergence}. All methods converge quickly, while the spread of objective function values due to initializations decreases.

\begin{figure}[t]
    \centering
    \includegraphics[trim=0mm 4mm 0mm 1mm,clip, width =\linewidth]{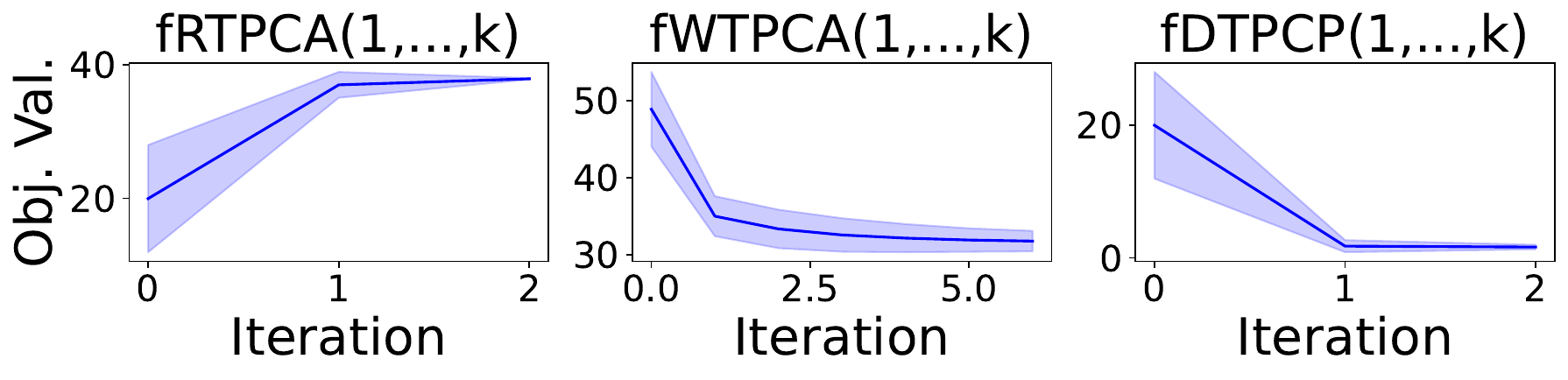}
    \caption{$50$ random initializations of \fTPCA~ variations. The blue line is the mean and the shaded region is the standard deviation. The $x$-axis of this plot is the number of iterations of~\cref{alg: flagified pca} performed in the tangent space.\vspace{-3mm}}
    \label{fig: all_convergence}
\end{figure}

\paragraph{Outlier detection on $Gr(2,4)$}
To compare between different flag type realizations of flagified robust PCAs, we now synthesize data with inliers and outliers on $Gr(2,4)$, the set of all $2$-planes in $\R^4$ represented as $Gr(2,4) = \left \{ [\X]: \X \in \R^{4 \times 2} \text{ and }\X^T \X = \I  \right \}$~\cite{edelman1998geometry}. To do so, consider two random points $[\X],[\Y] \in Gr(2,4)$ acting as centroids for inliers and outliers, respectively.  
Inliers are sampled as $\Exp_{[\X]}(a\V_i)$ where $a \sim \mathcal{U}[0,1)$, $\V_1, \V_2 \in \mathcal{T}_{[\X]}(Gr(2,4))$ are two random tangent vectors and $\Exp_{[\X]}$ is the exp-map of $Gr(2,4)$. We randomly choose $i\in \{1,2\}$. 
Outliers are sampled similarly as $\Exp_{[\Y]}(b\V)$ where $b \sim \mathcal{U}[0,0.1)$ and gradually added to the dataset.
~\cref{fig:out_auc_gr24_man} plots the AUCs for outlier detection using the first $k=2$ principal directions. \foneTDPCP~produces the highest AUC and is more stable to the presence of outliers. We also found that Euclidean \PCA~variants with the same data produce lower AUC (see supplementary).

\begin{figure}[t]
    \centering
    \includegraphics[width =\linewidth, trim=0mm 1.6mm 0mm 4mm, clip]{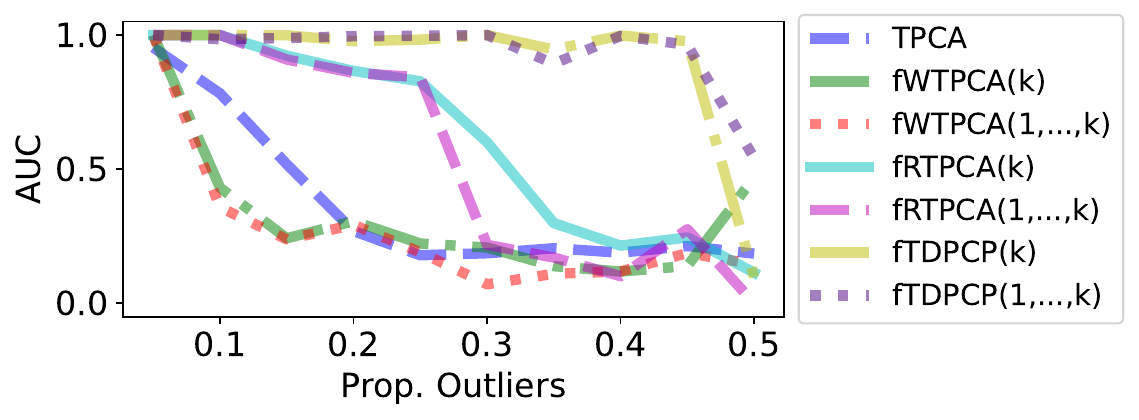}
    \caption{AUC of different algorithms for outlier detection using the first $k=2$ principal directions of outlier-contaminated data on $Gr(2,4)$. All iterative variants are optimized with $100$ max. iters.}
    \label{fig:out_auc_gr24_man}
\end{figure}

\paragraph{Outlier detection and reconstruction on Kendall pre-shape space}
We use an outlier-contaminated version of the 2D Hands~\cite{stegmann2002} to probe the performance on a real dataset. We represent the $44$ total inlier Procrustes-aligned hands
and added outliers in the Kendall pre-shape space~\cite{kendall1984shape}: $\Sigma_2^{56} := \left \{ \X \in \R^{56 \times 2} \: : \: \| \X\|_F = 1 \text{ and } \sum_{i=1}^{56} \x_i = 0 \right \}$.
We sample outliers as open ellipses with axes sampled from $\mathcal{N}(.4,.5)$, centers from $\mathcal{N}(0,.1)$, and a hole that is $\approx 6.8\%$ of the entire ellipse. We project these outliers onto $\Sigma_2^{56}$ by normalization and mean-centering. 
~\cref{fig: hand_pred_ablation} reports the AUC on outlier detection as we gradually add outliers. \fTDPCP~has the best outlier detections for both flag variants followed by \fTRPCA, \fTWPCA, and \TPCA~with flag type $(1,2,\dots,k)$ producing different AUC than $\flag(k)$ (\cf~\cref{tab:pca_summary}). All algorithms in these experiments are initialized with the SVD.
\begin{figure}[t]
    \centering
    \includegraphics[width = \linewidth]{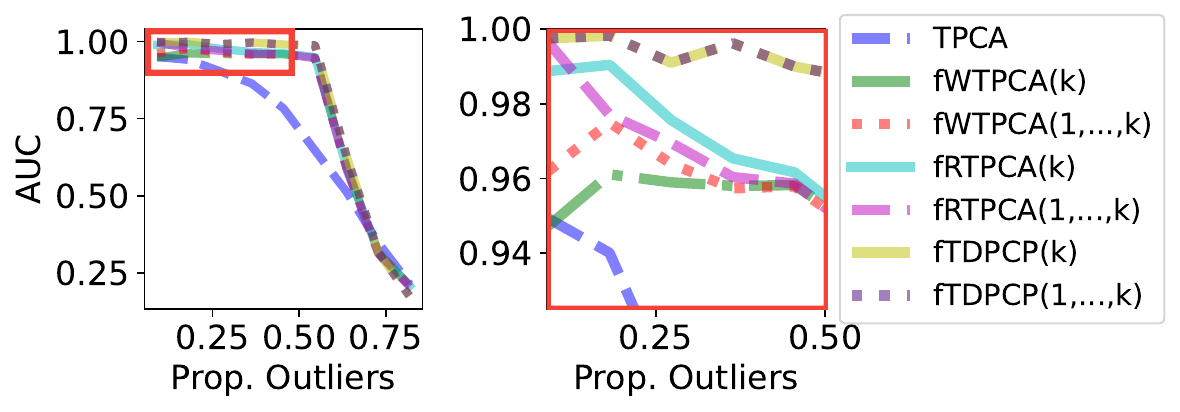}
    \caption{Mean AUC for outlier predictions using the first $k=4$ principal directions where we gradually add outlier ellipses to the $2$D Hands dataset. The mean is over $20$ trials of adding outliers.\vspace{-3mm}}
    \label{fig: hand_pred_ablation}
\end{figure}
We further consider a dataset with $30$ outliers to isolate the hands (inliers). We run \TPCA~with $k=4$ principal directions to reconstruct the first hand in~\cref{fig:rec_with_out_removal}. In a slight abuse of notation, we reconstruct a hand $\x \in \Sigma_2^{56}$ using $k=4$ principal tangent directions $\{\bu_1, \dots, \bu_4\} \in \mathcal{T}_{\bm{\mu}}\left(\Sigma_2^{56}\right)$ as $\hat{\x} = \Log_{\mu} \left( \U \U^T \Exp_{\mu}(\x) \right)$ where $\U = [\bu_1, \dots, \bu_4]$. Since \foneTDPCP~and \ftwoTDPCP~almost perfectly detect all the outliers, they produce the best reconstructions.
\begin{figure}
    \centering
    \includegraphics[width = \linewidth]{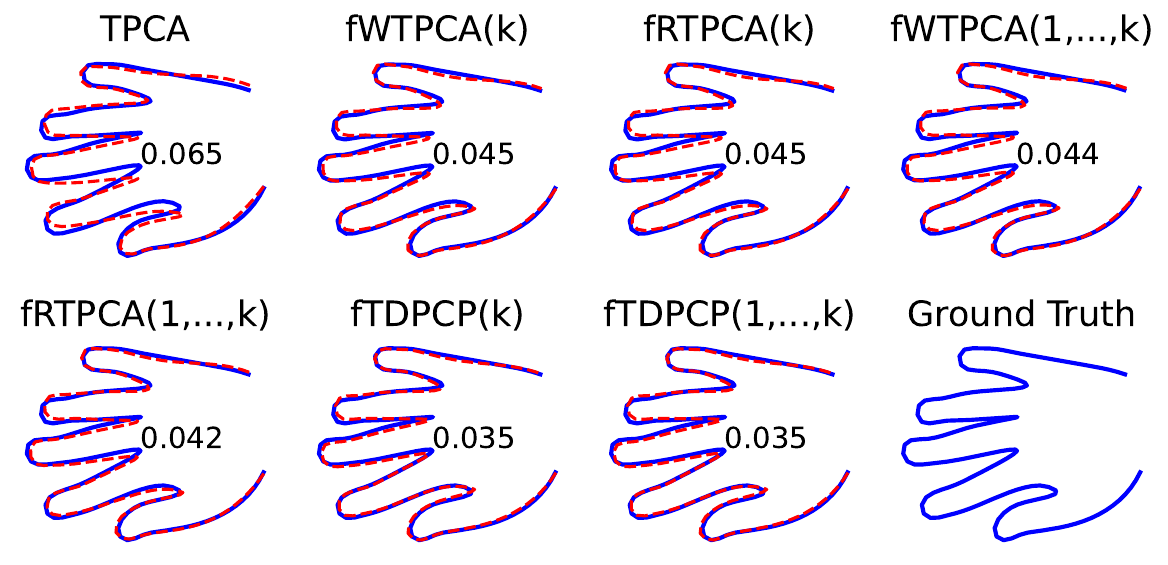}
    \caption{Reconstructions using a \PCA~of the inliers detected by variants of \fTWPCA, \fTRPCA, \fTDPCP~and \TPCA~on a the $2$D Hands dataset with $28$ hands and $16$ ellipses. The reconstruction error is reported inside each hand.}\label{fig:rec_with_out_removal}
\end{figure}

\section{Conclusion}
Having fun with flags, we have presented a unifying flag-manifold-based framework for computing robust principal directions of Euclidean and non-Euclidean data. Covering PCA, Dual-PCA, and their tangent versions in the same framework has given us a generalization power to develop novel, manifold-aware outlier detection and dimensionality reduction algorithms, either by modifying flag-type or by altering norms. We further devised practical algorithms on Stiefel manifolds to efficiently compute these robust directions without requiring direct optimization on the flag manifold. 
Our experimental evaluations revealed that new variants of robust and dual \PCA/tangent \PCA~discovered in our framework can be useful in a variety of applications.

\paragraph{Limitations \& future Work}
We cannot handle \emph{non-linear flags}~\cite{haller2023weighted} and hence cannot cover nested spheres/hyperbolic spaces~\cite{dryden2019principal,jung2012analysis,fan2022nested}.
We have also not included Barycentric subspaces~\etal~\cite{pennec2018barycentric}. We leave these for a future study.

\paragraph{Acknowledgements}
N. Mankovich and G. Camps-Valls acknowledge support from the project "Artificial Intelligence for complex systems: Brain, Earth, Climate, Society," funded by the Department of Innovation, Universities, Science, and Digital Society, code: CIPROM/2021/56. T. Birdal acknowledges support from the Engineering and Physical Sciences Research Council [grant EP/X011364/1].

{\small
\bibliographystyle{ieeenat_fullname} 
\bibliography{refs} 
}

\vspace{2mm}
\appendix
\section*{Appendices}

\section{Theoretical Justifications \& Discussions}\label{sec:proofs}

\paragraph{On the unifying aspects of our framework}
In our framework, the link between RPCA \& Dual-PCA, established also in the discussed earlier works, emerges as a by-product of our unifying formulation. To elucidate, our flag-based framework allows for: (i) extending \DPCP~to manifold-valued data (\fTDPCP), (ii) interpolating between $L_1$/$L_2$--DPCP~via the use of non-trivial flag types, and (iii) an efficient algorithms for computing flag--(tangent) DPCP for any flag type. To the best of our knowledge, Alg. 1 (main paper) is the only method for finding non-trivial flags of robust directions and when used for both \fRPCA~\& \fWPCA. 



\subsection{Proof of Prop. 3}
Let us recall the proposition before delving into the proof.
\begin{prop}[Stiefel optimization of (weighted) fPCA]
    Suppose we have weights $\{w_{ij}\}_{i=1,j=1}^{i=k,j=p}$ for a dataset $\{ \x_j \}_{j=1}^p \subset \R^n$ along with a flag type $(n_1,n_2,\dots, n_k; n)$. We store the weights in the diagonal weight matrices $\{\W_i\}_{i=1}^k$ with diagonals $(\W_i)_{jj} = w_{ij}$. If
    \begin{equation}
        \U^\ast  = \argmax_{\U \in St(n_k,n)} \sum_{i=1}^k \tr \left( \U^T \X \W_i \X^T \U \I_i \right)
    \end{equation}
    where $\I_i$ is determined as a function of the flag signature. For example, for $\flag(n+1)$: 
    \begin{equation}
    (\I_i)_{l,s} = 
    \begin{cases}
        1, & l = s \in \{ n_{i-1} + 1, 
 n_{i-1} + 2, \dots, n_i\} \\
        0, &\mathrm{ otherwise}\nonumber\\
    \end{cases}    
\end{equation}
    Then $[\![\U^\ast ]\!] = [\![\U ]\!]^\ast$ is the weighted fPCA of the data with the given weights (e.g., solves~\cref{eq:wflagpca}) as long as we restrict ourselves to a region on $\flag(n+1)$ and $St(n_k,n)$ where weighted fPCA is convex.
\end{prop}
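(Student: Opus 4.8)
The plan is to expand the weighted \fPCA~objective of~\cref{eq:wflagpca} into a trace form that depends on $\U$ only through the flag $[\![\U]\!]$, and then to identify that trace form with the Stiefel objective~\cref{eq:wflag_pca}; this mirrors the argument of~\cite{Mankovich_2023_ICCV}. First I would do the algebra: for $\U \in St(n_k,n)$ write $\U_i = \U\mathbf{E}_i$, where $\mathbf{E}_i \in \R^{n_k \times m_i}$ is the column-selection matrix picking columns $n_{i-1}+1,\dots,n_i$, so that $\mathbf{E}_i^T\mathbf{E}_i = \I_{m_i}$ and $\mathbf{E}_i\mathbf{E}_i^T = \I_i$ with $\I_i$ as in~\cref{eq: Ii}. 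Since $\U^T\U = \I$ we get $\U_i^T\U_i = \I_{m_i}$, and therefore, with $\pi_{\U_i}(\x_j) = \U_i\U_i^T\x_j$,
\begin{equation*}
\|\pi_{\U_i}(\x_j)\|_2^2 = \x_j^T\U_i\U_i^T\U_i\U_i^T\x_j = \x_j^T\U_i\U_i^T\x_j = \x_j^T\U\I_i\U^T\x_j = \tr\!\big(\U^T\x_j\x_j^T\U\,\I_i\big).
\end{equation*}
Weighting by $w_{ij}$, summing over $i$, applying $\mathbb{E}_j = \tfrac1p\sum_j$, and using $\sum_j w_{ij}\x_j\x_j^T = \X\W_i\X^T$, the objective of~\cref{eq:wflagpca} becomes $\tfrac1p\sum_{i=1}^k \tr(\U^T\X\W_i\X^T\U\,\I_i)$, i.e.\ the objective of~\cref{eq:wflag_pca} up to the positive constant $1/p$, which does not affect the $\argmax$.

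Next I would verify that $f(\U) := \sum_i \tr(\U^T\X\W_i\X^T\U\,\I_i)$ is constant on the fibers of the quotient map $St(n_k,n)\to\flag(n+1)$. Those fibers are the orbits of the right action $\U\mapsto\U\Q$ by block-diagonal orthogonal matrices $\Q \in O(m_1)\times\cdots\times O(m_k)$ (only these factors act on an $St(n_k,n)$ representative). Such a $\Q$ commutes with the block-diagonal projector $\I_i$, i.e.\ $\Q\I_i\Q^T = \I_i$, so cyclicity of the trace gives $\tr\big((\U\Q)^T\X\W_i\X^T(\U\Q)\,\I_i\big) = \tr\big(\U^T\X\W_i\X^T\U\,\Q\I_i\Q^T\big) = \tr\big(\U^T\X\W_i\X^T\U\,\I_i\big)$, hence $f(\U\Q)=f(\U)$. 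Together with the fact (\cref{def:flagobj}) that $[\![\U]\!] = [\![\U']\!]$ exactly when $\U'=\U\Q$ for such a $\Q$, this shows $f$ descends to a smooth function on $\flag(n+1)$ equal to $p$ times the \fPCA~objective.

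It then follows that the two optimization problems are the same: $\U\mapsto[\![\U]\!]$ is surjective and $f$ factors through it up to the constant $p$, so any maximizer $\U^\ast$ of~\cref{eq:wflag_pca} gives a maximizer $[\![\U^\ast]\!]$ of~\cref{eq:wflagpca}, and any maximizing flag lifts to a maximizer of $f$; maxima exist by compactness of $St(n_k,n)$ and $\flag(n+1)$. Restricting to the region of $\flag(n+1)$ (and its preimage in $St(n_k,n)$) on which weighted \fPCA~is convex makes the maximizer unique, which upgrades this orbit-level correspondence to the exact equality $[\![\U^\ast]\!] = [\![\U]\!]^\ast$.

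The trace algebra of the first step is routine. The step that needs care is the second: confirming that $f$ is genuinely constant along the flag fibers, so that the Stiefel problem and the flag problem coincide rather than merely being related. The remaining subtlety is making the convexity hypothesis do its work — it is precisely what excludes the spurious non-uniqueness (e.g.\ from repeated weighted eigenvalues, or from the orbit ambiguity itself) that would otherwise prevent the orbit-level match from becoming an honest equality of optimizers.
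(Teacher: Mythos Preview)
Your proposal is correct and follows essentially the same approach as the paper's proof: first the trace algebra showing the two objectives coincide (using $\U_i^T\U_i=\I$ and $\U\I_i\U^T=\U_i\U_i^T$), then invariance of $f$ under the right action of block-diagonal orthogonal matrices to see that $f$ is well-defined on flag representatives, and finally the convexity/uniqueness hypothesis to pin down $[\![\U^\ast]\!]=[\![\U]\!]^\ast$. Your write-up is a bit cleaner in that it phrases the second step as ``$f$ descends through the quotient $St(n_k,n)\to\flag(n+1)$'' and explicitly records the harmless $1/p$ constant, but the content is the same as the paper's argument.
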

\begin{proof}
    First we will show that the flag and Stiefel objective functions are equivalent. Take 
    \begin{equation}
        [\![ \U ]\!] \in \flag(n+1) = \flag(n_1,n_2,\dots,n_k;n). 
    \end{equation}
    We decompose $\U = [\U_1,\U_2,\dots,\U_k]$ where $\U_i \in \R^{n \times m_i}$ and $\sum_{l=1}^i m_l = n_i$. Using $\I_i$ (defined above) we have $\U \I_i \U^T = \U_i$. 
    
    Recall the objective function for both \fRPCA~and \fDPCP~is
    \begin{align}
        \mathbb{E}_j \left[ \sum_{i=1}^k w_{ij}\| \pi_{\U_i} (\x_j)\|_2^2 \right] 
        &= \sum_{j=1}^p \sum_{i=1}^k w_{ij}\| \pi_{\U_i} (\x_j)\|_2^2, \label{eq: var_opt_supp}\\
        &= \sum_{j=1}^p \sum_{i=1}^k w_{ij}\| \U_i \U_i^T \x_j \|_2^2
    \end{align}
    Using the definition of norms and $\U_i^T \U_i = \I$,~\cref{eq: var_opt_supp} is equivalent to 
    \begin{equation}
        \sum_{j=1}^p \sum_{i=1}^k w_{ij}\tr \left( \x_j^T \U_i \U_i^T  \x_j \right)
    \end{equation}
    Now, using properties of trace, matrix multiplication, and our handy $\{\I_i\}_{i=1}^k$ we reach our desired result
    \begin{align}
        &\sum_{j=1}^p \sum_{i=1}^k w_{ij}\tr \left( \U_i^T \x_j  \x_j^T \U_i\right), \\
        &=  \sum_{i=1}^k \tr \left( \U_i^T \left( \sum_{j=1}^p w_{ij} \x_j  \x_j^T \right) \U_i\right), \\
        &=  \sum_{i=1}^k \tr \left( \U_i^T \left(\X \W_i \X^T\right) \U_i\right), \\
        &=  \sum_{i=1}^k \tr \left( \U_i \U_i^T \X \W_i \X^T \right), \\
        &=  \sum_{i=1}^k \tr \left( \U \I_i \U^T \X \W_i \X^T \right), \\
        &=  \sum_{i=1}^k \tr \left(  \U^T \X \W_i \X^T\U \I_i  \right).\label{eq: var_opt_supp1} 
    \end{align} 

So we have shown that the flag and Stiefel objective functions are equivalent.

Finally, we show $[\![ \U^\ast ]\!]= [\![ \U]\!]^\ast$. Notice that the objective function for weighted flag PCA is invariant to different flag manifold representatives. First, let $f$ denote the objective function in~\cref{eq: var_opt_supp1}. Suppose $\U^\ast$ solves $\argmax_{\Y \in St(n_k, n)} f(\Y)$. Then take some other representative for $[\![\U^\ast]\!]$, namely $\U^\ast \M$ where
\begin{equation}
    \M = \begin{bmatrix}
    \M_1 & \bm{0} & \bm{0} & \bm{0}\\
    \bm{0} & \M_2 & \bm{0} &\bm{0}\\
    \vdots & \vdots & \ddots & \vdots\\
    \bm{0} & \bm{0} & \dots & \M_k  \\
         \end{bmatrix}
         \text{ and } \M_1 \in O(m_i).
\end{equation}
Then $f(\U^\ast \M) = f(\U^\ast)$ because
\begin{align}
    f(\U^\ast \M) &= \sum_{i=1}^k \tr ((\U_i^\ast\M)^T \X \W_i \X^T (\U_i^\ast\M)),\\
     &= \sum_{i=1}^k \tr (\U_i^\ast \M\M^T{\U_i^\ast}^T \X \W_i \X^T),\\
     &= \sum_{i=1}^k \tr (\U_i^\ast{\U_i^\ast}^T \X \W_i \X^T),\\
     &= \sum_{i=1}^k \tr ({\U_i^\ast}^T \X \W_i \X^T\U_i^\ast),\\
     &= f(\U^\ast).
\end{align}
So $f(\cdot)$ has the same value for any representative for $[\![ \U^\ast ]\!]$. Since $f(\U^\ast) \geq f(\Y)$ for all $\Y \in St(n_k,n)$, then 
\begin{equation}
    f(\U^\ast\M)= f(\U^\ast) \geq f(\Y) = f(\U^\ast\bm{O})
\end{equation}
for all $[\![\Y ]\!] \in \flag(n+1)$ where $\bm{O}$ is of the same block structure as $\M$. 

Recall $[\![\U]\!]^\ast \in \flag(n+1)$ maximizes $f$, so $f(\U) \geq f(\Y)$ for all $[\! [\Y ] \! ] \in \flag(n+1)$ and since $f(\cdot)$ has the same value for any representative of $[\! [\Y ] \! ]$, we have $f(\U) \geq f(\Y)$ for all $\Y \in St(n_k,n)$. 

Recall, that $f(\U^\ast) \geq f(\Y)$ for all $\Y \in St(n_k,n)$. So $f(\U^\ast) = f(\U)$. Since $f$ has a unique maximizer over $\flag(n+1)$, we have $[\![\U^\ast]\!] = [\![ \U]\!]^\ast = \argmax_{[\![\Y ]\!] \in \flag(n+1)} f(\Y)$.
\end{proof}

\subsection{Proof of Prop. 4}
Let us recall the proposition before delving into the proof.
\begin{prop}[Stiefel optimization for flagified Robust (Dual-)PCAs]
   We can formulate \fRPCA, \fWPCA, \fDPCP, and \fWDPCP~as optimization problems over the Stiefel manifold using $[\![ \U ] \!]^\ast = [\![\U^\ast] \!]$ and the following: 
    \begin{align}
    &\U^\star = \\
    &\begin{cases}
            \argmax\limits_{\U \in St(n, n_k)}  \sum_{i=1}^k \tr \left(  \U^T \Pm_i^+\U \I_i  \right), & \mathrm{(fRPCA)}\\
            \argmin\limits_{\U \in St(n, n_k)}\sum_{i=1}^k \tr \left(\Pm_i^- -  \U^T \Pm_i^-\U \I_i  \right), & \mathrm{(fWPCA)}
            \end{cases}\nonumber\\
    &\nonumber\\
    &\U^\star = \\
    &\begin{cases}
            \argmin\limits_{\U \in St(n, n_k)} \sum_{i=1}^k \tr \left(  \U^T \Pm_i^+ \U \I_i  \right), & \mathrm{(fDPCP)}\\
            \argmax\limits_{\U \in St(n, n_k)} \sum_{i=1}^k\tr \left(\Pm_i^- -  \U^T \Pm_i^-\U \I_i  \right) & \mathrm{(fWDPCP)}
            \end{cases}\nonumber
    \end{align}
    where $\Pm^-=\X \W^-_i([\![\U]\!]) \X^T$, $\Pm^+=\X \W^+_i([\![\U]\!]) \X^T$ and $\W^-_i([\![\U]\!])$, $\W^+_i([\![\U]\!])$ are defined in~\cref{tab:weight_org} as long as we restrict ourselves to a region on $\flag(n+1)$ and $St(n_k,n)$ where flag robust and dual PCAs are convex.
\end{prop}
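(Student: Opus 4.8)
The plan is to reduce each of the four cases to an application of Proposition~3 (Stiefel optimization of weighted fPCA), observing that the flagified robust and dual PCA objectives are, at the current iterate $[\![\U]\!]$, exactly weighted fPCA objectives with weights frozen to the values $\W_i^+([\![\U]\!])$ or $\W_i^-([\![\U]\!])$ from \cref{tab:weight_org}. First I would treat \fRPCA: starting from the flag objective $\mathbb{E}_j[\sum_i \|\pi_{\U_i}(\x_j)\|_2]$, I would rewrite each summand as $\|\pi_{\U_i}(\x_j)\|_2 = (\W_i^+)_{jj}\,\|\pi_{\U_i}(\x_j)\|_2^2$ whenever $\|\U\I_i\U^T\x_j\|_2 \geq \epsilon$, i.e. when the $\max$ in \cref{eq: max_weights} does not clip. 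This turns the robust objective into a weighted sum-of-squares objective with weights $w_{ij} = (\W_i^+([\![\U]\!]))_{jj}$, so Proposition~3 applies verbatim and gives
\begin{equation}
\argmax_{\U \in St(n_k,n)} \sum_{i=1}^k \tr\left(\U^T \X \W_i^+([\![\U]\!]) \X^T \U \I_i\right) = \argmax_{\U \in St(n_k,n)} \sum_{i=1}^k \tr\left(\U^T \Pm_i^+ \U \I_i\right),
\end{equation}
using the abbreviation $\Pm_i^+ = \X \W_i^+([\![\U]\!]) \X^T$; the equality of flag and Stiefel maximizers then follows from the representative-invariance argument already proved in Proposition~3. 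The \fDPCP~case is identical except that the outer optimization is a minimization rather than a maximization, which carries through the same algebra because the objective is the same trace expression — only the direction of optimization changes, consistent with \cref{dfn: genDualPCA}.

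Next I would handle the Weiszfeld-type variants \fWPCA~and \fWDPCP, whose flag objectives are $\mathbb{E}_j[\sum_i \|\x_j - \pi_{\U_i}(\x_j)\|_2]$. Here I would introduce the weights $(\W_i^-([\![\U]\!]))_{jj} = \|\x_j - \U\I_i\U^T\x_j\|_2^{-1}$ (again assuming the $\max$ in \cref{eq: min_weights} does not clip) and write $\|\x_j - \pi_{\U_i}(\x_j)\|_2 = (\W_i^-)_{jj}\,\|\x_j - \pi_{\U_i}(\x_j)\|_2^2$. The key expansion is
\begin{equation}
\|\x_j - \U_i\U_i^T\x_j\|_2^2 = \x_j^T\x_j - \x_j^T \U_i\U_i^T\x_j = \tr(\x_j\x_j^T) - \tr(\U_i^T \x_j\x_j^T \U_i),
\end{equation}
using $\U_i^T\U_i = \I$ exactly as in the proof of Proposition~3. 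Summing over $j$ with weights $w_{ij} = (\W_i^-)_{jj}$ and over $i$, and abbreviating $\Pm_i^- = \X\W_i^-([\![\U]\!])\X^T$, the objective becomes $\sum_{i=1}^k \tr\left(\Pm_i^- - \U^T \Pm_i^- \U \I_i\right)$ via the same trace manipulations ($\tr(\U_i\U_i^T \Pm_i^-) = \tr(\U\I_i\U^T\Pm_i^-) = \tr(\U^T\Pm_i^-\U\I_i)$ and $\sum_j w_{ij}\tr(\x_j\x_j^T) = \tr(\Pm_i^-)$). Minimizing the residual corresponds to \fWPCA~and maximizing it to \fWDPCP, matching the $\argmin$/$\argmax$ split in the statement. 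Finally, I would invoke Proposition~3's representative-invariance step once more — it applies unchanged because the first ($\U$-independent) term $\tr(\Pm_i^-)$ is constant under right-multiplication by a block-orthogonal $\M$, and the second term is precisely the weighted fPCA objective — to conclude $[\![\U^\star]\!] = [\![\U]\!]^\star$ in all four cases, under the stated convexity/locality restriction on $\flag(n+1)$ and $St(n_k,n)$.

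The main obstacle, and the only genuinely delicate point, is the interaction between the $\max\{\cdot,\epsilon\}$ clipping in \cref{eq: max_weights,eq: min_weights} and the clean algebraic identities above: the substitutions $\|\cdot\|_2 = (\W_i^\pm)_{jj}\|\cdot\|_2^2$ are only exact when no term is clipped, so strictly speaking the equivalence established is between the \emph{reweighted surrogate} solved at each iteration of \cref{alg: flagified pca} and a weighted fPCA instance, rather than between the original $L_1$-type objective and the Stiefel program globally. This is exactly why the proposition is stated with the hypothesis that we restrict to a region where the flag robust/dual PCAs are convex (and, implicitly via \cref{cor: fdpcp converges}, where $\|\U\I_i\U^T\x_j\|_2 \geq \epsilon$); I would make this assumption explicit at the start of the proof and note that under it the clipping is inactive, so all the identities hold with equality. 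Everything else is a routine repetition of the trace bookkeeping and the representative-invariance argument from the proof of Proposition~3.
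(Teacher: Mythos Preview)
Your approach is essentially the one the paper takes: rewrite each robust/dual flag objective as the corresponding weighted trace expression via the identity $\|\cdot\|_2 = \|\cdot\|_2^{-1}\,\|\cdot\|_2^2$, then transfer the flag/Stiefel equivalence from Proposition~3's representative-invariance argument. The paper additionally inserts a Lagrangian step --- writing $\mathcal{L}^\pm(\U)=f^\pm(\U)+\tr(\bm{\Lambda}^\pm(\I-\U^T\U))$, differentiating, and reading off $\sum_i\tr(\U^T\X\W_i\X^T\U\I_i)=\pm 2\tr(\bm{\Lambda}^\pm)$ --- before the max/min case split, but this is not strictly needed once you have shown that the flag and Stiefel objectives are literally equal as functions of $\U$; your more direct route is valid.

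Two small corrections. First, when you write ``Proposition~3 applies verbatim,'' be careful: Proposition~3 concerns \emph{fixed} weights, whereas here $\W_i^\pm$ depend on $\U$. What you actually need (and what you later invoke) is only the representative-invariance portion of that proof, applied to the genuine objective $f^\pm$; this works because $\U\M\I_i(\U\M)^T=\U\I_i\U^T$ for block-orthogonal $\M$, so both the norms and the induced weights are invariant. Second, you call $\tr(\Pm_i^-)$ ``$\U$-independent,'' but $\Pm_i^-=\X\W_i^-([\![\U]\!])\X^T$ depends on $\U$ through the weights; the correct statement is that it is invariant under $\U\mapsto\U\M$, which is all the representative-invariance argument requires. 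Neither slip breaks your argument.
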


\begin{proof}
    First, we write the objective functions for \fRPCA~and \fDPCP~over $St(n_k,n)$  using~\cref{eq: max_weights} to define each $\W_i^+$ as
    \begin{align}
        f^+(\U) &=\mathbb{E} \left[ \sum_{i=1}^k\| \pi_{\U_i} (\x_j)\|_2 \right],\\
        &= \sum_{j=1}^p \sum_{i=1}^k\| \pi_{\U_i} (\x_j)\|_2,\\
        &= \sum_{j=1}^p \sum_{i=1}^k\sqrt{\tr \left( \x_j^T \U_i \U_i^T  \x_j \right)},\\
        &=\sum_{j=1}^p \sum_{i=1}^k \sqrt{\tr \left( \U_i^T \x_j  \x_j^T \U_i\right)}\\
        &=\sum_{j=1}^p \sum_{i=1}^k \sqrt{\tr \left( \U^T \x_j  \x_j^T \U \I_i \right)},\\
        &=\sum_{j=1}^p \sum_{i=1}^k \frac{\tr \left( \U^T \x_j  \x_j^T \U \I_i \right)}{\sqrt{\tr \left( \U^T \x_j  \x_j^T \U \I_i \right)}},\\
        &= \sum_{i=1}^k \tr \left( \U^T \sum_{j=1}^p \frac{\x_j  \x_j^T}{{\| \U \I_i\U^T \x_j\|_2}} \U \I_i \right),\\
        &=\sum_{i=1}^k \tr \left( \U^T \X \W_i^+ \X^T \U \I_i \right),\\
        &=\sum_{i=1}^k \tr \left( \U^T \Pm_i^+ \U \I_i \right).
    \end{align}
    
    Now we write the objective functions for \fWPCA~and \fWDPCP~over $St(n_k,n)$ using~\cref{eq: min_weights} to define each $\W_i^-$ as 
    \begin{align}
        f^-(\U) &= \mathbb{E} \left[ \sum_{i=1}^k\| \x_j - \pi_{\U_i} (\x_j)\|_2 \right], \\
        &= \sum_{j=1}^p \sum_{i=1}^k\| \x_j - \pi_{\U_i} (\x_j)\|_2,\\
        &= \sum_{j=1}^p \sum_{i=1}^k\sqrt{\tr \left( \x_j^T\x_j - \x_j^T\U_i \U_i^T  \x_j \right)},\\
        &= \sum_{j=1}^p \sum_{i=1}^k \sqrt{\x_j^T\x_j - \tr \left(\U_i^T \x_j  \x_j^T \U_i\right)},\\
        &= \sum_{j=1}^p \sum_{i=1}^k \sqrt{\x_j^T\x_j - \tr \left(\U^T \x_j  \x_j^T \U \I_i\right)},
    \end{align}
    \begin{align}
        &= \sum_{j=1}^p \sum_{i=1}^k \frac{\x_j^T\x_j - \tr \left(\U^T \x_j  \x_j^T \U \I_i\right)}{\sqrt{\x_j^T\x_j - \tr \left(\U^T \x_j  \x_j^T \U \I_i\right)}},\\
        &=   \sum_{j=1}^p \frac{\x_j  \x_j^T}{\| \x_j - \U \I_i\U^T \x_j\|_2}\\
        &-\sum_{i=1}^k \tr \left( \U^T \sum_{j=1}^p \frac{\x_j  \x_j^T}{{\| \x_j - \U \I_i\U^T \x_j\|_2}} \U \I_i \right)\\
        &=\sum_{i=1}^k \tr\left(\X \W_i^- \X^T - \U^T \X \W_i \X^T \U \I_i \right),\\
         &=\sum_{i=1}^k \tr\left(\Pm^- -  \U^T \Pm^- \U \I_i \right).
    \end{align}

    Now, we can write the Lagrangians for these problems with the symmetric matrix of Lagrange multipliers $\bm{\Lambda}$ as
    \begin{align*}
        \mathcal{L}^+ (\U) &= f^+(\U) + \tr(\bm{\Lambda}^+(\I - \U^T \U )),\\
        \mathcal{L}^- (\U) &= f^-(\U)+ \tr(\bm{\Lambda}^-(\I - \U^T \U )).
    \end{align*}
    Then, we collect our gradients in the following equations
    \begin{align}
        \nabla_{\U} \mathcal{L}^+
        &=  \sum_{j=1}^p \sum_{i=1}^k \frac{\x_j \x_j^T \U \I_i}{\| \U \I_i \U^T \x_j\|_2} -2 \U \bm{\Lambda}^+\\
         \nabla_{\U} \mathcal{L}^-
        &= - \sum_{j=1}^p \sum_{i=1}^k \frac{\x_j \x_j^T \U \I_i}{\| \x_j \x_j^T - \U \I_i\U^T \x_j\|_2} - 2 \U \bm{\Lambda}^- \\
         \nabla_{\bm{\Lambda}^+} \mathcal{L}^+ &=  \nabla_{\bm{\Lambda}^-} \mathcal{L}^- =  \I - \U^T \U.\\
    \end{align}

    Then setting $\nabla_{\U} \mathcal{L}_1 = \bm{0}$, $\nabla_{\bm{\Lambda}_1} \mathcal{L}_1= \bm{0}$, left multiplying by $\U^T$, and playing with properties of trace results in 
    \begin{align}
        \sum_{i=1}^k \tr \left( \U^T \X \W_i \X^T \U \I_i \right) &= 2\tr( \bm{\Lambda}^+), \label{eq: lagr_max}\\
        \sum_{i=1}^k \tr \left( \U^T \X \W_i \X^T \U \I_i \right) &= -2\tr( \bm{\Lambda}^-).\label{eq: lagr_min}
    \end{align}
    Then we have the following cases: we choose
    \begin{itemize}
        \item (\fRPCA) $\U^\ast$ to maximize $\tr( \bm{\Lambda}^+)$ so that we maximize $f^+$,
        \item (\fDPCP) $\U^\ast$ to minimize $\tr( \bm{\Lambda}^+)$ so that we minimize $f^+$,
        \item (\fWPCA) $\U^\ast$ to minimize $-\tr( \bm{\Lambda}^-)$ so that we minimize $f^-$,
        \item (\fWDPCP) $\U^\ast$ to maximize $-\tr( \bm{\Lambda}^-)$ so that we maximize $f^-$.
    \end{itemize}
 
\begin{align}
        \sum_{j=1}^p \sum_{i=1}^k \frac{\x_j \x_j^T \U \I_i}{\| \U_i^T \x_j\|_2} &=  \bm{\Lambda}_1 \U, \\
        \sum_{j=1}^p \sum_{i=1}^k \frac{\U^T \x_j \x_j^T \U \I_i}{\| \U_i^T \x_j\|_2} &=  \bm{\Lambda}_1, \\
        \sum_{j=1}^p \sum_{i=1}^k (\W_i)_{jj} \U^T \x_j \x_j^T \U \I_i &=  \bm{\Lambda}_1, \\
        \tr\left(\sum_{i=1}^k \U^T \left( \sum_{j=1}^p (\W_i)_{jj} \x_j \x_j^T \right)  \U \I_i\right) &=  \tr(\bm{\Lambda}_1), \\
        \sum_{i=1}^k \tr\left(\U^T \left( \sum_{j=1}^p (\W_i)_{jj} \x_j \x_j^T \right)  \U \I_i\right) &=  \tr(\bm{\Lambda}_1), \\
        h_{[\![ U]\!]}(\U) &=  \tr(\bm{\Lambda}_1).
\end{align}
        Similarly, setting $\nabla_{\U} \mathcal{L}_2 = \bm{0}$, $\nabla_{\bm{\Lambda}_2} \mathcal{L}_2= \bm{0}$ and leveraging~\cref{eq: min_weights} to define $\{\W_i\}_i$ results in
\begin{align}
        \sum_{j=1}^p \sum_{i=1}^k \frac{\x_j \x_j^T \U \I_i}{\| \x_j  - \U_i \U_i^T \x_j\|_2} &=  \bm{\Lambda}_2 \U, \\
        -\sum_{j=1}^p \sum_{i=1}^k \frac{\U^T \x_j \x_j^T \U \I_i}{\| \x_j  - \U_i \U_i^T \x_j\|_2} &=  \bm{\Lambda}_2, \\
        -\sum_{j=1}^p \sum_{i=1}^k (\W_i)_{jj} \U^T \x_j \x_j^T \U \I_i &=  \bm{\Lambda}_2, \\
        -\sum_{i=1}^k \tr(\U^T \left( \X \W_i \X^T \right)  \U \I_i) &=  \tr(\bm{\Lambda}_2),\\
        -h_{[\![ U]\!]}(\U) &=  \tr(\bm{\Lambda}_2).
\end{align}

    Finally, using a similar argument to that for the proof of the Stiefel optimization of \fPCA~leveraging assumed convexity, we have that $[ \! [ \U^\ast ] \! ] = [ \! [ \U ] \! ]^\ast$. 
\end{proof}

\subsection{Proof of Prop. 5}
We now prove the convergence of our algorithm. Let us recall the proposition from the main paper before delving into the proof.
\begin{prop}[Convergence of \cref{alg: flagified pca} for \fDPCP]
    \cref{alg: flagified pca} for \fDPCP~converges as long as $\| \U \I_i \U^T \x_j\|_2 \geq \epsilon$ $\forall i,j$ and we restrict ourselves to a region on $\flag(n+1)$ and $St(n_k,n)$ where \fDPCP is convex.
\end{prop}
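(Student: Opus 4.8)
The plan is to follow the standard Weiszfeld/IRLS convergence template used in~\cite{beck2015weiszfeld,Mankovich_2023_ICCV,mankovich2023subspace}: exhibit a surrogate (majorizing) function for the \fDPCP~objective, show that one iteration of~\cref{alg: flagified pca} minimizes this surrogate and therefore does not increase the true objective, and conclude convergence of the monotone bounded objective sequence together with convergence of the iterates via compactness of the Stiefel manifold and the assumed local convexity.

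First I would write the \fDPCP~objective as $f([\![\U]\!]) = \sum_{j=1}^p \sum_{i=1}^k \| \U\I_i\U^T\x_j\|_2$ and, at the current iterate $[\![\U^{(m)}]\!]$, define the weights $(\W_i^+)_{jj} = \|\U^{(m)}\I_i(\U^{(m)})^T\x_j\|_2^{-1}$ as in~\cref{eq: max_weights} (the hypothesis $\|\U\I_i\U^T\x_j\|_2 \ge \epsilon$ guarantees these are well-defined and bounded, so the truncation at $\epsilon$ is inactive). The key elementary inequality is the arithmetic–geometric/concavity bound $\sqrt{t} \le \frac{1}{2}\left(\frac{t}{\sqrt{t_0}} + \sqrt{t_0}\right)$ for $t,t_0 > 0$, with equality at $t = t_0$, applied with $t = \|\U\I_i\U^T\x_j\|_2^2$ and $t_0 = \|\U^{(m)}\I_i(\U^{(m)})^T\x_j\|_2^2$. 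Summing over $i,j$ this yields a quadratic surrogate $g([\![\U]\!] \mid [\![\U^{(m)}]\!]) = \frac{1}{2}\sum_{i=1}^k \tr(\U^T\X\W_i^+\X^T\U\I_i) + \text{const}$ that majorizes $f$ and touches it at $[\![\U^{(m)}]\!]$. Next I would observe that the update step for \fDPCP~in~\cref{alg: flagified pca} (via~\cref{eq: stDPCAmin}) is exactly $\argmin$ of this quadratic surrogate over $St(n_k,n)$ — this is where Prop.~4 is invoked to identify the Stiefel minimization with minimizing $\sum_i \tr(\U^T\Pm_i^+\U\I_i)$. Hence $g([\![\U^{(m+1)}]\!]\mid[\![\U^{(m)}]\!]) \le g([\![\U^{(m)}]\!]\mid[\![\U^{(m)}]\!]) = f([\![\U^{(m)}]\!])$, and by the majorization $f([\![\U^{(m+1)}]\!]) \le g([\![\U^{(m+1)}]\!]\mid[\![\U^{(m)}]\!]) \le f([\![\U^{(m)}]\!])$, giving monotone non-increase.

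Having established monotonicity, convergence of the objective values follows since $f \ge 0$ is bounded below, so $f([\![\U^{(m)}]\!])$ converges. For convergence of the iterates, I would use that $St(n_k,n)$ is compact, extract a convergent subsequence, and argue — using that equality in the majorization inequality forces a fixed point and that the surrogate minimizer is a stationary point of $f$ — that any limit point is a critical point; the assumed local convexity of \fDPCP~on the restricted region then upgrades this to convergence of the full sequence to the unique minimizer there. The main obstacle, and the step requiring the most care, is the passage from objective-value convergence to iterate convergence: one must rule out oscillation between distinct points with equal objective value, which is precisely where the convexity restriction on the region of $\flag(n+1)$ and $St(n_k,n)$ does the work, and one must also handle the flag-vs-Stiefel representative ambiguity (the objective is invariant under the block-orthogonal action, so "convergence" is really convergence in $\flag(n+1)$, measured e.g. by the chordal distance $d_c$). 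Everything else is the routine Weiszfeld bookkeeping already carried out in~\cite{Mankovich_2023_ICCV}.
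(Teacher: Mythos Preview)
Your proposal is correct and takes essentially the same approach as the paper: the paper's proof defines the auxiliary function $h([\![\Z]\!],[\![\U]\!])=\sum_{i,j}\|\Z\I_i\Z^T\x_j\|_2^2/\|\U\I_i\U^T\x_j\|_2$, observes that one iteration of~\cref{alg: flagified pca} minimizes $h(\cdot,[\![\U]\!])$, and uses the elementary inequality $a^2/b\ge 2a-b$ (equivalent to your concavity bound $\sqrt{t}\le\tfrac12(t/\sqrt{t_0}+\sqrt{t_0})$) to deduce $f(T([\![\U]\!]))\le f([\![\U]\!])$, concluding by monotonicity and nonnegativity of $f$. The only substantive difference is that the paper stops at convergence of the objective values, whereas you go on to sketch convergence of the iterates via compactness and the local-convexity hypothesis; that extra paragraph is a reasonable addition but is not part of the paper's argument.
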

\begin{proof}
This proof follows closely to what was done in~\cite{Mankovich_2023_ICCV}. First let $f^+: \flag(n+1) \times \flag(n+1) \rightarrow \R$ denote the \fDPCP~objective function and $T: \flag(n+1) \rightarrow \flag(n+1)$ denote an iteration of~\cref{alg: flagified pca}. Then, assuming that $\| \U \I_i \U^T \x_j \|_2 \geq \epsilon$ for $i=1,2,\dots, k$ and $j = 1,2, \dots, p$, we define the function $h: \flag(d+1) \times \flag(d+1) \rightarrow \R$ as

\begin{align}
h([\![\Z]\!], [\![\U]\!]) &= \sum_{i=1}^p \tr(\Z^T \X \W_i^+([\![\U]\!]) \X^T \Z \I_i) ,\label{eq:h_def}
\end{align}
using the definition in~\cref{eq: max_weights} for $\W_i^+([\![\U]\!])$.

Some algebra reduces $h([\![\Z]\!], [\![\U]\!])$ to
\begin{equation}
    h([\![\Z]\!], [\![\U]\!]) = \sum_{i=1}^p \sum_{j=1}^k \frac{\| \Z \I_i \Z^T \x_j \|_2^2}{\| \U \I_i \U^T \x_j \|_2}.
\end{equation}

From~\cref{eq:h_def}, we see that $h([\![\Z]\!], [\![\U]\!])$ is the weighted flag PCA objective function of $\{ \x_j \}_{j=1}^p$ with weights on the diagonals of $\W_i([\![\U]\!])$. The weighted flagified orthogonal PCA (\fOPCA) optimization problem with weights in the diagonals $\W^+_i([\![\U]\!])$ can be solved using a similar algorithm to~\cref{alg: weighted flagified pca} by just minimizing instead of maximizing (see~\cref{alg: weighted orth flagified pca}). Thus minimizing $h([\![\Z]\!], [\![\U]\!])$ over $[\![\Z]\!]$ is an iteration of~\cref{alg: weighted orth flagified pca} for \fDPCP~which means

\begin{equation}
    T([\![\U]\!]) = \argmin_{[\![\Z]\!] \in \flag(d+1)} h([\![\Z]\!], [\![\U]\!]).
\end{equation}

Using this, we have
\begin{equation}
    h(T([\![\U]\!]), [\![\U]\!]) \leq h([\![\U]\!], [\![\U]\!]).
\end{equation}

By the definition of $h$
\begin{align}
    h([\![\U]\!], [\![\U]\!]) &= \sum_{i=1}^p \sum_{j=1}^k \frac{\| \U \I_i \U^T \x_j \|_2^2}{\| \U \I_i \U^T \x_j \|_2},\\
    &= \sum_{i=1}^p \sum_{j=1}^k \| \U \I_i \U^T \x_j \|_2,\\
    &= f([\![\U]\!]).
\end{align}
This means, we have
\begin{equation}\label{eq: h flagirls less}
    h(T([\![\U]\!]),  [\![\U]\!]) \leq f([\![\U]\!]).
\end{equation}

Now we use the identity from algebra: $\frac{a^2}{b} \geq 2a-b$ for any $a,b \in \R$ and $b > 0$. Let 
\begin{equation}
    a = \| \Z \I_i \Z^T \x_j \|_2 \text{ and } b =  \| \U \I_i \U^T \x_j \|_2 .
\end{equation} 
Then
\begin{align}
h([\![\Z]\!], [\![\U]\!]) &\geq 2\sum_{j=1}^p \sum_{i=1}^k\| \Z \I_i \Z^T \x_j \|_2\\
&- \sum_{j=1}^p  \sum_{i=1}^k\| \U \I_i \U^T \x_j \|_2,  \\
&= 2f([\![\Z]\!]) - f([\![\U]\!]).
\end{align}
Now, take $[\![\Z]\!] = T([\![\U]\!])$. This gives us
\begin{equation}\label{eg: h flagirls greater}
    h(T([\![\U]\!]), [\![\U]\!]) \geq 2f(T([\![\U]\!]))-  f([\![\U]\!]).
\end{equation}

Then, combining Eq.~\ref{eg: h flagirls greater} with Eq.~\ref{eq: h flagirls less}, we have

\begin{align}
    2f(T([\![\U]\!]))- f([\![\U]\!]) &\leq f([\![\U]\!]), \\
    f(T([\![\U]\!])) &\leq f([\![\U]\!]) . 
\end{align}

Finally, notice that the real sequence with terms $f^+(T([\![\U^{(m-1)}]\!])) = f^+([\![\U^{(m)}]\!]) \in \R$ is bounded below by $0$ and is decreasing. So it converges as $m  \rightarrow \infty$.

\end{proof}

\section{Further Notes on Flagified PCA}
We now generalize PCA and its variants using flags by grouping eigenvectors using the flag type. The PCA optimization problem is naturally an optimization problem on the Stiefel manifold, $St(k,n) := \{ \U \in \R^{k \times n} \: : \: \U^T \U = \I\}$. Suppose $\U = [\bu_1, \bu_2, \dots, \bu_k] \in St(k,n)$ are the $k<n$ principal components of a data matrix $\X$. These are naturally ordered according to their decreasing associated objective function values\footnote{The objective function values are also referred to as explained variances, eigenvalues and squared singular values}. This results in the nested subspace structure
\begin{equation}
    [\![ \U ]\!] = [\bu_1] \subset [\bu_1, \bu_2] \subset \cdots \subset [\bu_1, \bu_2, \dots, \bu_k] \subset \R^n.
\end{equation}
So one can think of $[\![\U ]\!] \in \flag(1,2,\dots,k; n)$, and consequently, reformulate PCA as an optimization problem over $\flag(1,2,\dots,k; n)$.  
Thinking of $\U$ as $[\![ \U ]\!]$ emphasizes the nested subspace structure of the principal components according to their associated objective function values.

What if we have multiple principal components with the same objective function value? In other words, suppose we have at least one eigenvalue of $\X \X^T$ with a geometric multiplicity greater than $1$? For example, assume our dataset has a large variance on some $2$-plane, and all other directions orthogonal to that plane have smaller, unequal variance. Then, the first two principal components, $\bu_1$ and $\bu_2$, will have the same objective function value in~\cref{eq: maxpca}. Additionally, any rotation of the two vectors within the plane $\text{span}(\bu_1, \bu_2)$ will still produce the same objective function values. So,~\cref{eq: maxpca} is no longer a convex optimization problem over $St(k,n)$ because the first two principal components are not unique. However, if we remove $[\bu_1] \subset [\bu_1, \bu_2]$ from the nested subspace structure and consider $[\![ \U ]\!] \in \flag(2,3,\dots, k;n)$ as
\begin{equation}
    [\![ \U ]\!] = [\bu_1, \bu_2] \subset [\bu_1, \bu_2, \bu_3] \subset \cdots \subset [\bu_1, \dots, \bu_k]\subset \R^n.
\end{equation}
Then we have a unique solution to~\cref{eq: maxpca} over $\flag(2,3,\dots, k;n)$ in place of $St(k,n)$. In practice, it is unlikely that we will have two eigenvectors with the same eigenvalue. However, we can consider two eigenvalues the same as long as $|\lambda_i - \lambda_j| < \epsilon$ for some $\epsilon > 0$.

Motivated by this example, we state a generalization of PCA, which optimizes over flags of a given type.
\begin{dfn}[Flagified PCA (fPCA)~\cite{pennec2018barycentric}]
A flag of principal components is the solution to:
    \begin{equation}\label{eq:flagpca}
         \argmax_{[\![\U]\!] \in \flag(n+1)}  \mathbb{E} \left[ \sum_{i=1}^k\| \pi_{\U_i} (\x_j)\|_2^2 \right]
    \end{equation}
\end{dfn}
Ye~\etal find a solution~\cref{eq:flagpca} using Newton's method on the flag manifold ~\cite{ye2022optimization} and Nguyen offers a method for solving such a problem using RTR on flag manifolds~\cite{nguyen2022closed}. These algorithms produce the same basis vectors for flags regardless of flag type. These basis vectors are different than those found using standard PCA. But, for $[\![ \U ]\!] \in \flag(n_1,n_2,\dots,n_k,n)$ that solves~\cref{eq:flagpca} using either Newton's method or RTR, the column space of $\U_{:,:n_k}$ is the same as the span of the first $n_k$ principal components. This is because the objective function in~\cref{eq:flagpca} is invariant to ordering the columns of $\U$.

Variants on flagified PCA that maximize $\tr(\U^T\X\X^T \U)^q$ over $\flag(n+1)$ are coined ``nonlinear eigenflags'' and are difficult to solve for $q=2$~\cite{ye2022optimization}. Yet, methods from Mankovich~\etal can be adapted to solve such problems, especially for $q = 1/2$. Another variant of fPCA is weighted fPCA where we assume a weight for each subspace dimension in the flag $i$ and each data point $j$ as $w_{ij} \in \R$. We propose this formulation in the manuscript.

\section{DPCP-IRLS and the Grassmannian}\label{sec:grassdpcp}
This concept was first unerarthed in~\cite{mankovich2023subspace}. Expanding the matrix norm we have
\begin{align}
    \| \X^T \B \|_{1,2} &= \sum_{j=1}^p \| \B^T \x_j \|_2,\\
    &= \sum_{j=1}^p \sqrt{\sum_{i=1}^k | \vb_i^T \x_j|^2 }, \\
    &= \sum_{j=1}^p \sqrt{ \x_j^T \B \B^T \x_j }, \\
    &= \sum_{j=1}^p \sqrt{ \tr\left( \B^T \x_j  \x_j^T \B \right) }.
\end{align}
This can be phrased using principal angles as
\begin{equation}
    \argmin_{\B^T \B = \I} \sum_{j=1}^p \cos \theta([\x_j], [\B]).
\end{equation}
Suppose $\{[\X_j]\}_{j=1}^p \subset \Gr(k,n)$. Namely, $\X_j \in \R^{n \times k}$ where $\X_j^T \X_j = \I$ for each $j$. A natural generalization of DPCP-IRLS is the optimization problem on the Grassmannian,
\begin{equation}\label{eq:grassdpcp}
    \argmin_{[\B]\in \Gr(k,n)} \sum_{j=1}^p \| \cos \theta([\X_j], [\B])\|_2.
\end{equation}
This can also be solved by an IRLS scheme.

The ``flagified'' version of~\cref{eq:grassdpcp} is
\begin{equation}
    \argmin_{[\![ \B ]\!] \in \flag(n+1)} \sum_{j=1}^p \| \cos \theta([\X_j], [\B])\|_2.
\end{equation}

\section{Novel Flagified Robust and Dual PCA and TPCA Variants}

We present the intuition behind the geometry of Robust and Dual \PCA~versus \TPCA~in~\cref{fig: concept1}.  Then we provide a visual comparison between Euclidean and manifold variants of \RPCA~and \DPCP~in~\cref{fig: concept0}.

\begin{figure}[t]
    \centering
    \includegraphics[width = \linewidth]{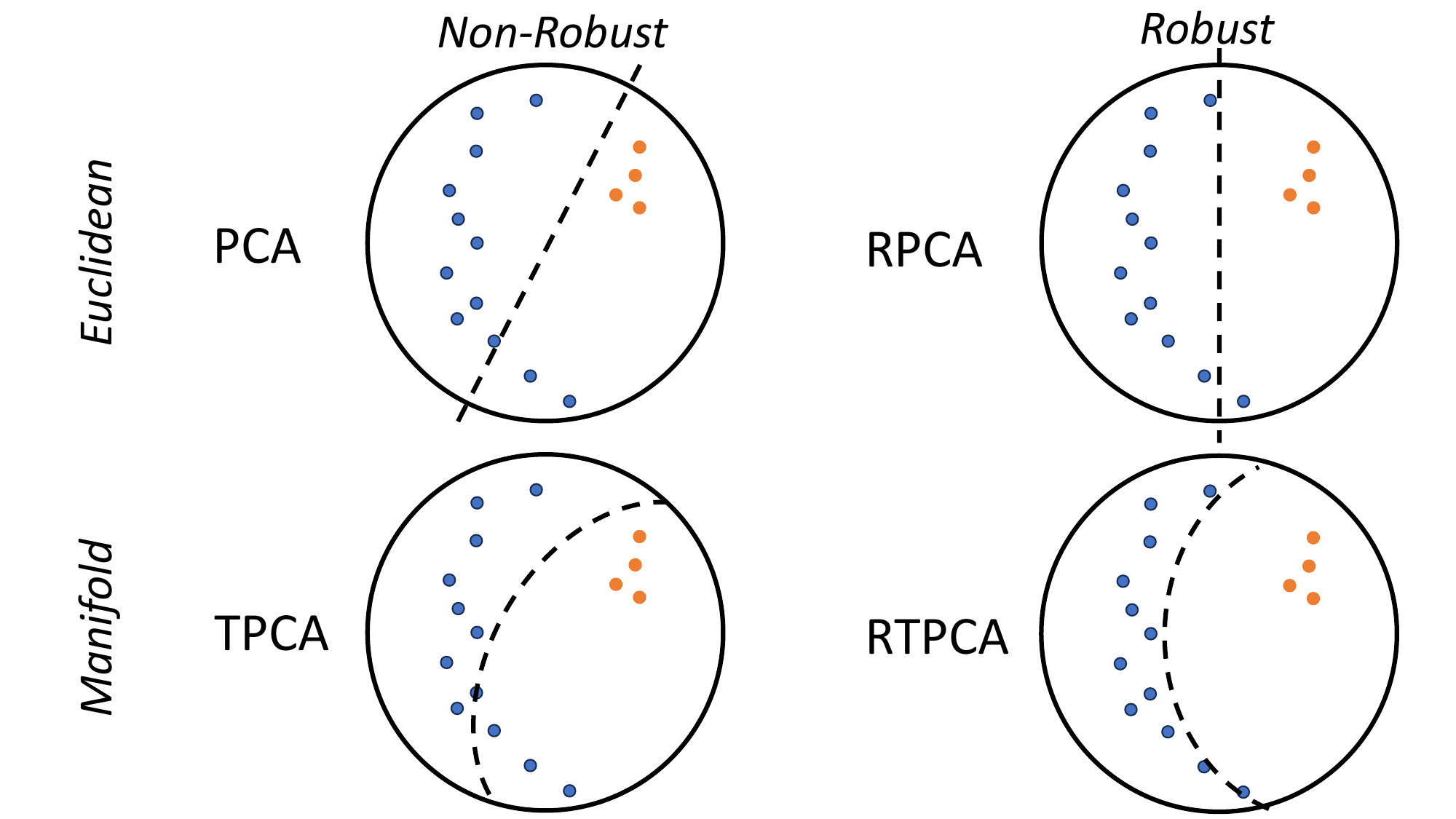}
    \caption{Inliers (blue) and outliers (orange) on the $2$-sphere. The first row are Euclidean algorithms and the second row are manifold (tangent space) algorithms. The dashed lines are the first principal subspace (first row) and geodesic (second row) spanned by the first principal direction. Note: first principal subspaces pass through the center of the sphere and first principal geodesics are great circles on the sphere.}
    \label{fig: concept1}
\end{figure}

\begin{figure}[t]
    \centering
    \includegraphics[width = \linewidth]{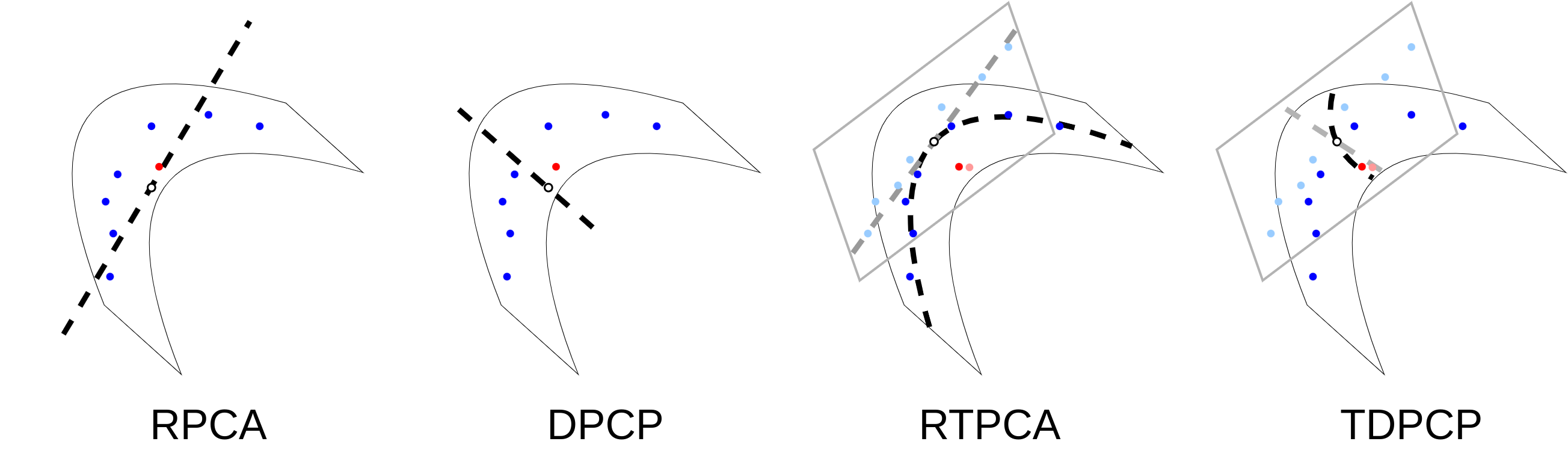}
    \caption{Given manifold valued data with inliers (blue) and outliers (red). The dashed black lines are the $1$st principal component for RPCA and DPCP, for R$\mathcal{T}$PCA and $\mathcal{T}$DPCP this is the $1$st principal geodesic. For RPCA and R$\mathcal{T}$PCA this line / geodesic should contain the inliers. Due to the reversal in the objective, for DPCP and $\mathcal{T}$DPCP this geodesic should contain the outliers.}
    \label{fig: concept0}
\end{figure}

\cref{tab:flags_rule} summarizes our novel flagified robust and dual PCA variants and emphasizes that flag types other than $(1,2\dots,k;n)$ and $(k;n)$ produce novel principal directions that are ``in between' $L_1$ and $L_2$ formulations.

\begin{table}[t]
    \centering
    \begin{tabular}{ccc}
        Flagified (Dual-)PCA & Robust PCA Variant\\
       \toprule
       \toprule
       \foneRPCA & \LoneRPCA\\
       \fRPCA$(\cdot)$ & -- \\
       \ftwoRPCA & \LtwoRPCA \\
       \midrule
       \foneWPCA & \LoneWPCA \\
       \fWPCA$(\cdot)$ & -- \\
       \ftwoWPCA & \LtwoWPCA \\
       \midrule
       \foneDPCP & \LoneDPCP\\
       \fDPCP$(\cdot)$ & --\\
       \ftwoDPCP & \LtwoDPCP\\
       \midrule
       \midrule
       \foneTRPCA & \LoneTRPCA\\
       \fTRPCA$(\cdot)$ & -- \\
       \ftwoTRPCA & \LtwoTRPCA \\
       \midrule
       \foneTWPCA & \LoneTWPCA \\
       \fTWPCA$(\cdot)$ & -- \\
       \ftwoTWPCA & \LtwoTWPCA \\
       \midrule
       \foneTDPCP & \LoneTDPCP\\
       \fTDPCP$(\cdot)$ & --\\
       \ftwoTDPCP & \LtwoTDPCP\\
    \end{tabular}
    \caption{Flag types for Euclidean optimization (first half) and manifold optimization (second half). Flag optimization in these algorithms provides a new objective functions which live in between $L_1$ and $L_2$ robust PCA formulations. Note: we remove the number of the ambient dimension in the flag signature for less redundant notation and we assume we are computing the first $k$ principal components.}
    \label{tab:flags_rule}
\end{table}

Finally~\cref{tab:alg_names} summarizes the naming schemes of all of the algorithms intriduced in this paper

\begin{table}[t]
    \centering
    \begin{tabular}{cc}
        Abbreviation & Name \\
        \toprule
        \toprule
        \PCA & Principal Component Analysis\\
        \RPCA & Robust \PCA\\
        \WPCA & Weiszfeld \PCA\\
        \DPCP & Dual Principal Component Pursuit\\
        \WDPCP & Weiszfeld DPCP\\
        \midrule
        \fPCA & Flagified PCA\\
        \fRPCA & Flagified \RPCA\\
        \fWPCA & Flagified \WPCA\\
        \fDPCP & Flagified \DPCP\\
        \fWDPCP & Flagified \WDPCP\\
        \midrule
        \TPCA & Tangent \PCA\\
        \TRPCA & Robust \TPCA\\
        \TWPCA & Weiszfeld \TPCA\\
        \TDPCP & Tangent \DPCP\\
        \TWDPCP & Tangent \WDPCP\\
        \midrule
        \fTPCA & Flagified \TPCA\\
        \fTRPCA & Flagified \TRPCA\\
        \fTWPCA & Flagified \TWPCA\\
        \fTDPCP & Flagified \TDPCP\\
        f\TWDPCP & Flagified \TWDPCP
    \end{tabular}
    \caption{The names of the major algorithms covered in this work.}
    \label{tab:alg_names}
\end{table}

\section{Rest of the Proposed Algorithms}
In the paper, we proposed three new algorithms. We now present these algorithms as well as the objective functions they minimize. First,~\cref{alg: weighted flagified pca} finds a solution to weighted flagified PCA
\begin{equation}
    [\![\U]\!]^\star=\argmax_{[\![\U]\!] \in \flag(n+1)}  \mathbb{E}_j \left[ \sum_{i=1}^k w_{ij}\| \pi_{\U_i} (\x_j)\|_2^2 \right].
\end{equation}
Second,~\cref{alg: weighted orth flagified pca} finds a solution to weighted flagified orthogonal PCA (\fOPCA)
\begin{equation}
    [\![\U]\!]^\star=\argmin_{[\![\U]\!] \in \flag(n+1)}  \mathbb{E}_j \left[ \sum_{i=1}^k w_{ij}\| \pi_{\U_i} (\x_j)\|_2^2 \right].
\end{equation}
Lastly,~\cref{alg: tPDs} approximates solutions to 
\begin{align}
&[\![\U]\!]^\star \approx \\
&\begin{cases}
        \argmax\limits_{[\![\U]\!] \in \flag(n+1)}  \mathbb{E}_j \left[ \sum_{i=1}^k w_{ij} d(\bm{\mu}, \pi_{\U_i} (\x_j))^2 \right], & (\text{\fTPCA})\\
        \argmin\limits_{[\![\U]\!] \in \flag(n+1)}  \mathbb{E}_j \left[ \sum_{i=1}^k w_{ij} d(\bm{\mu}, \pi_{\U_i} (\x_j))^2 \right], & (\text{\fTOPCA})\\
        \argmax\limits_{[\![\U]\!] \in \flag(n+1)}  \mathbb{E}_j \left[ \sum_{i=1}^k d(\bm{\mu}, \pi_{\U_i} (\x_j)) \right], & (\text{\fTRPCA})\\
        \argmin\limits_{[\![\U]\!] \in \flag(n+1)} \mathbb{E}_j \left[ \sum_{i=1}^k d( \x_j ,\pi_{\U_i} (\x_j)) \right], & (\text{\fTWPCA})\\
        \argmin\limits_{[\![\U]\!] \in \flag(n+1)}  \mathbb{E}_j \left[\sum_{i=1}^k d(\bm{\mu}, \pi_{\U_i} (\x_j)) \right], & (\text{\fTDPCP})
        \end{cases}\nonumber
\end{align}

\setlength{\floatsep}{0.1cm}
\begin{algorithm}[t]
\setstretch{1.13}
\caption{Weighted fPCA}\label{alg: weighted flagified pca}
 \textbf{Inputs}: {Dataset $\{\x_j \in \R^n\}_{j=1}^p $,\\ weights $\{w_{ij}\}_{i,j=1}^{i=k,j=p} \subset \R$,\\ flag type $(n+1)$}\\
 \textbf{Output}: Weighted flagified principal directions $[\![\U]\!]^\ast \in \flag(n+1)$ \\[0.25em]
 \For{$i=1,2,\dots,k$}{
    $(\W_i)_{jl} \gets \begin{cases}
        w_{ij}, & j= l\\
        0, &\text{ elsewhere}
         \end{cases}$
     }
 $ \U^\ast \gets$ \text{Solve }~\cref{eq:wflag_pca} with $\{\W_i\}_{i=1}^k$ via Stiefel-CGD.\\
 $[\![\U]\!]^\ast \gets [\![\U^\ast]\!]$
\end{algorithm}

\begin{algorithm}[t]
\setstretch{1.13}
\caption{Weighted flag \OPCA~(\fOPCA)}\label{alg: weighted orth flagified pca}
 \textbf{Inputs}: {Dataset $\{\x_j \in \R^n\}_{j=1}^p $,\\ weights $\{w_{ij}\}_{i,j=1}^{i=k,j=p} \subset \R$,\\ flag type $(n+1)$}\\
 \textbf{Output}: Weighted flagified principal directions $[\![\U]\!]^\ast \in \flag(n+1)$ \\[0.25em]
 \For{$i=1,2,\dots,k$}{
    $(\W_i)_{jl} \gets \begin{cases}
        w_{ij}, & j= l\\
        0, &\text{ elsewhere}
         \end{cases}$
     }
 $ \U^\ast \gets$ \text{Minimize the objective in }~\cref{eq:wflag_pca} with $\{\W_i\}_{i=1}^k$ via Stiefel-CGD.\\
 $[\![\U]\!]^\ast \gets [\![\U^\ast]\!]$
\end{algorithm}

\begin{algorithm}[t]
\setstretch{1.13}
\caption{\fTPCA/\fTRPCA/\fTWPCA/\fTDPCP}
\label{alg: tPDs}
     \textbf{Input}: {Dataset: $\{\x_j\}_{j=1}^p \subset \Man$, flag type $(n+1)$, \fPCA~Variant: $\Phi:\mathcal{W} \rightarrow \flag(n+1)$}\\ 
     \textbf{Output}: Flagified principal tangent directions $[ \! [ \U ] \! ]^\ast$ \\[0.25em] 
     \If{\text{robust}}{
         $\bm{\mu} \leftarrow \text{KarcherMedian}\left(\{\x_j\}_{j=1}^p\right)$
         }
     \Else{
         $\bm{\mu} \leftarrow \text{KarcherMean}\left(\{\x_j\}_{j=1}^p\right)$
         }
    $\{\bv_j \}_j     \gets \{\Exp_{\bmu}(\x_j)\}_j$\\
    $\mathcal{W} \gets \{ \text{vec} (\bv_j) \}_j$\\
    $[ \! [ \U ] \! ]^\ast \gets \Phi \left( \mathcal{W}, n+1 \right)$
\end{algorithm}

\section{Extra Experiments}
\paragraph{Impact of flag-type on cluster detection} 
To assess the impact of flag-type, we generate a 
dataset $\{\x_j\}_{j=1}^{300} \subset \R^{10}$ with $3$ clusters ($C_1,C_2,C_3$) in which we curate the flag type corresponding to the data structure: $\flag(2,5,7; 10)$. To do this we sample $\{\x_j\}_{j=1}^{300} \subset \R^{10}$ with $3$ clusters. The $l$th entry of $\x$, $(\x)_l \in \R$, is sampled from
\begin{align}
    C(1) &: (\x)_l \sim \begin{cases}
                            \mathcal{U}[0,1), & l\leq 2\\
                            \mathcal{U}[0,0.1), & l \geq 3
                        \end{cases},\\
    C(2) &: (\x)_l \sim \begin{cases}
                            \mathcal{U}[0,1), & 3 \leq i \leq 5\\
                            \mathcal{U}[0,0.1), & i \leq 2 \text{ or } i \geq 6
                        \end{cases},\\    
    C(3) &: (\x)_l \sim \begin{cases}
                            \mathcal{U}[0,1), & i=6,7\\
                            \mathcal{U}[0,0.1), & i \leq 5 \text{ or } i \geq 8
                        \end{cases}.             
\end{align}
We then compute $2$ sets of $k=7$ principal directions by running \fWPCA~with flag type $(2,5,7;10)$ (\fWPCA$(2,5,7)$) and \ftwoWPCA~using~\cref{alg: flagified pca} with $200$ max. iters. Both of these methods result in a flag representative $\U = [\U_1, \U_2, \U_3] \in \R^{10\times 7}$ where $\U_1 \in \R^{10 \times 2}$, $\U_2 \in \R^{10 \times 3}$, and $\U_3 \in \R^{10 \times 2}$. We compute the reconstruction error for point $j$ against each $\U_i$ as $\sum_{j=1}^p \|\x_j - \U_i \U_i^T \x_j\|_2$. These errors are used for $3$ classification tasks, predicting $C_i$ using $\U_i$ for $i=1,2,3$. 
The corresponding AUC values are in~\cref{tab:fl_type}.
\fWPCA$(2,5,7)$ produces higher AUCs 
because it is optimized over a more optimal flag type, respecting the subspace structure of the data.
\begin{table}[t!]
    \centering
    \begin{tabular}{l@{\:}||@{\:}c@{\:}|@{\:}c@{\:}|@{\:}c@{\:}|@{\:}c@{\:}|@{\:}c@{\:}|@{\:}c}
         & \multicolumn{2}{c}{Cluster $1$} & \multicolumn{2}{c}{Cluster $2$} & \multicolumn{2}{c}{Cluster $3$}\\
         \toprule
        \fWPCA$(\cdot)$ & $(7)$ & $(2,5,7)$ & $(7)$ & $(2,5,7)$ & $(7)$ & $(2,5,7)$ \\
        \hline
        AUC $\uparrow$ & $0.72$ & $\mathbf{0.73}$ & $0.48$ & $\mathbf{1.00}$ & $0.43$ & $\mathbf{0.49}$
    \end{tabular}
    \caption{AUC for cluster classification using  \fWPCA. We see higher AUCs when we match the flag type for \fWPCA~with the cluster dimensions (e.g., $(2,5,7)$).\vspace{-1mm}}
    \label{tab:fl_type}
\end{table}

\paragraph{Data generation for ``Convergence on $4$-sphere''}
We first sample a random center $\x \in \mathbb{S}^4$, and then sample $100$ inlier tangent vectors from $\mathcal{U}[0,.01)$. Another $20$ outlier tangent vectors $\bv$, have entries $v_1, v_2 \sim \mathcal{U}[0,.01)$ and $v_3, v_4, v_5 \sim \mathcal{U}[0,.1)$. We wrap these vectors to have our dataset, $\{\Exp_{\x}(\bv)\}$. 

\paragraph{Impact of flag type on principal directions}
We run flagified robust \PCA~and \TPCA~variants using~\cref{alg: flagified pca} (with $200$ max. iters.) with different flag types on data on $Gr(2,4)$ with $100$ inliers and $20$ outliers sampled as described in the ``Outlier detection on $Gr(2,4)$'' section of the manuscript. We call $(1,2,3,4;5)$ the ``base'' flag type. We use $T$ to measure the different between principal directions from the base flag type $\{\bu_1 \dots, \bu_4\}$ and other principal directions $\{\bv_1 \dots, \bv_4\}$ as
\begin{equation}
    T = \frac{1}{4} \sum_{i=1}^4 \theta ( \bu_i, \bv_i)^2.
\end{equation}
We plot $T$ values for different flagified robust \PCA~and \TPCA~variants in~\cref{fig: flag_type}. We separate flag types into classes based on the number of nested subspaces. Flag types with the same number of nested subspaces are considered ``closer'' flags. We find that closer flag types have smaller $T$ values. This experiment verifies that running flagified robust PCA variants with different flag types recover different principal directions and these differences are directly proportional to the ``distance'' between flag types. This also emphasizes that flag types other than $(1,\dots,k;n)$ and $(k;n)$ indeed recover novel principal directions. The direct utility of these gap-filling methods to real-world datasets is future work.
%

\begin{figure}[t]
    \centering
    \includegraphics[width =\linewidth]{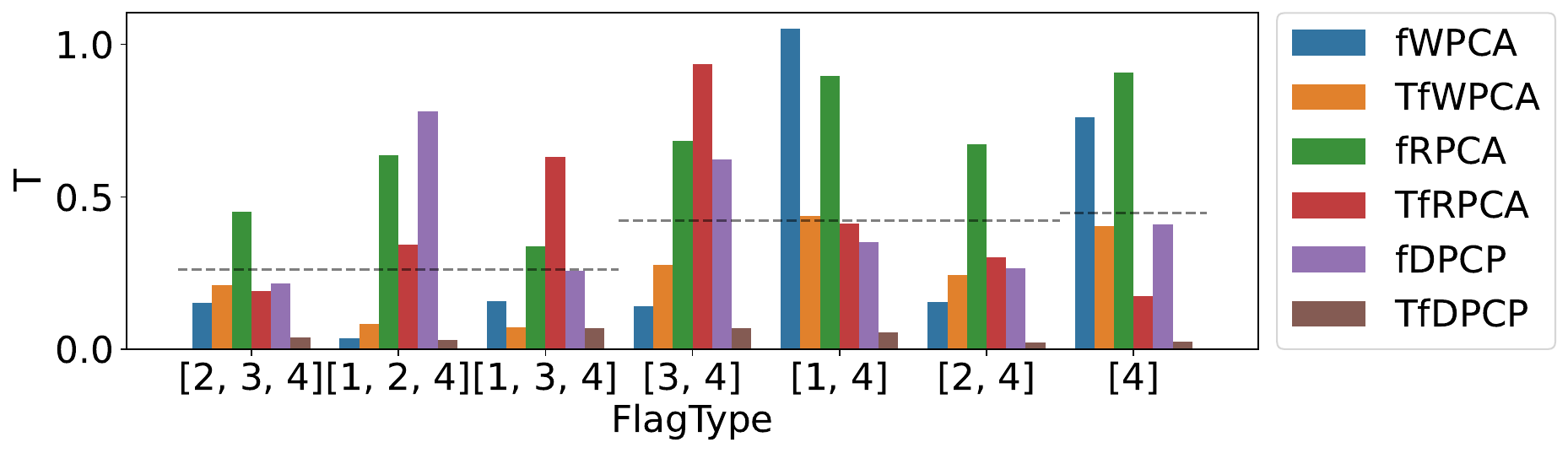}\vspace{-2mm}
    \caption{Smaller $T$ corresponds to principal directions which are more similar to those computed with flag type $(1,2,3,4;5)$. The mean $T$ values for each class of flag type are the horizontal dashed lines. Notice that, these mean values increase as we increase the distance between flag types. We truncate flag types by removing the ambient dimension ($5$).}
    \label{fig: flag_type}
\end{figure}

\paragraph{Outlier detection on $Gr(2,4)$}
We present the result of using \PCA, \foneWPCA, \ftwoWPCA, \foneRPCA, \ftwoRPCA, \foneDPCP, and \ftwoDPCP~on $Gr(2,4)$ data for outlier detection in~\cref{fig:out_auc_gr24_euc}. This is the same data as the data used for~\cref{fig:out_auc_gr24_man}; but, in this case, we run our algorithms on the vectorized matrix representatives for points on $Gr(2,4)$ and do outlier detection using Euclidean distance and variances.
\begin{figure}[t]
    \centering
    \includegraphics[width =\linewidth]{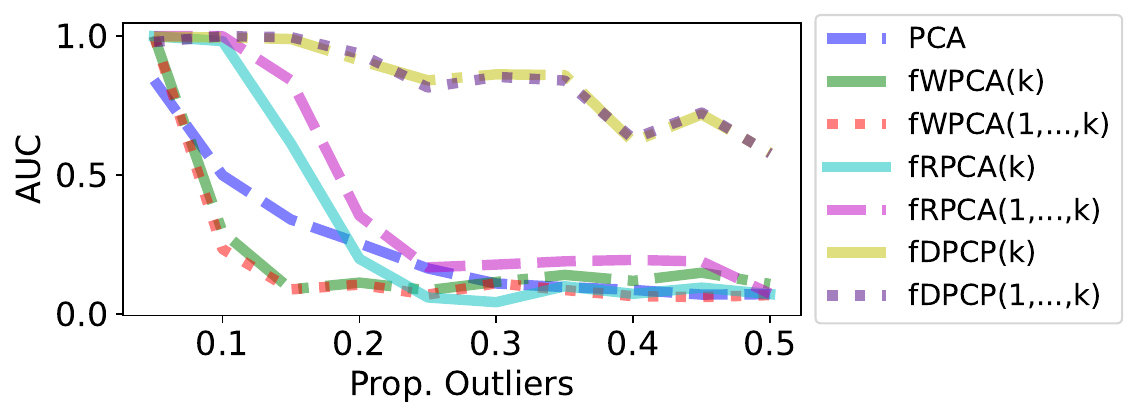}\vspace{-2mm}
gr    \caption{AUC of different algorithms for outlier detection using the first $k=2$ principal directions of outlier-contaminated data on $Gr(2,4)$. All algorithms other than \PCA~are optimized with~\cref{alg: flagified pca} with $100$ max. iters.\vspace{-2mm}}
    \label{fig:out_auc_gr24_euc}
\end{figure}

\paragraph{Hand reconstructions}
We use the $2$D Hands dataset and add ``hairball'' outliers by sampling from a normal distribution with mean $0$ and standard deviation $10$ ($\mathcal{N}(0,10)$), then we divide by the Frobenius norm and mean center to obtain a point on $\Sigma_2^{56}$. A figure with an example of an outlier ellipse and a hairball outlier is in~\cref{fig:outlier_qual}.

\begin{figure}[t]
    \centering
    \includegraphics[width = .7\linewidth]{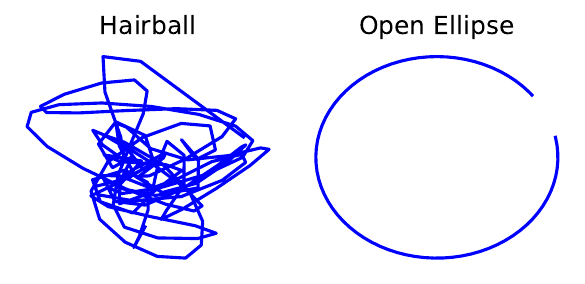}\vspace{-2mm}
    \caption{Examples of outliers used for contamination of the hands dataset. Hairballs are used in hand reconstruction and open ellipses are used in outlier detection.\vspace{-2mm}}
    \label{fig:outlier_qual}
\end{figure}

We run \foneTWPCA, \LoneTWPCA~using Alg. 1 from~\cite{wang2017ell} run on the tangent space, \foneTRPCA, and \TPCA~to find different versions of the first $k=1$ principal direction on a dataset with all $40$ hands and $5$ outliers. We compute reconstruction error for each method using the framework described in the $Gr(2,4)$ experiments. Our cumulative reconstruction errors for the $40$ inlier hands and a visualization of a hand reconstruction is in~\cref{fig:hand_res}. \LoneTWPCA~and \foneTWPCA~produce the lowest reconstruction errors on the hands and have the most sensible reconstructions. Additionally,~\cref{alg: tPDs} preforms just as well as Alg. 1 from~\cite{wang2017ell} run on the tangent space.

\begin{figure}[!t]
    \centering
    \includegraphics[width = .8\linewidth]{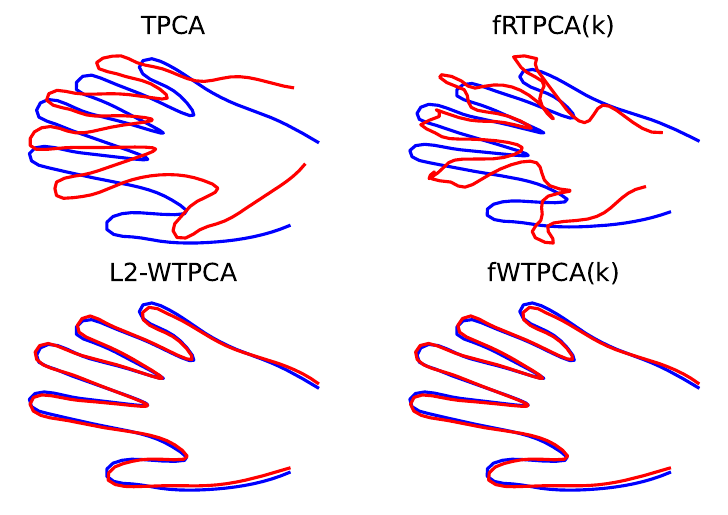}\vspace{-2mm}
    \caption{Reconstruction of hand $6$ using the first principal direction computed on a dataset with $40$ hands and $5$ outliers. The cumulative reconstruction errors for the $40$ inlier hands from L to R, Top to Bottom, are: $8.19$, $6.20$, $5.35$, and $5.35$.\vspace{-2mm}}
    \label{fig:hand_res}
\end{figure}

We move on to computing cumulative inlier reconstruction errors as we gradually add outliers and report results in~\cref{fig: hand_rec_ablation1,fig: hand_rec_ablation1}. \fTWPCA~have the most stable reconstruction errors followed by \fTRPCA, then \TPCA.

\begin{figure}[h!]
    \centering
    \includegraphics[width = \linewidth]{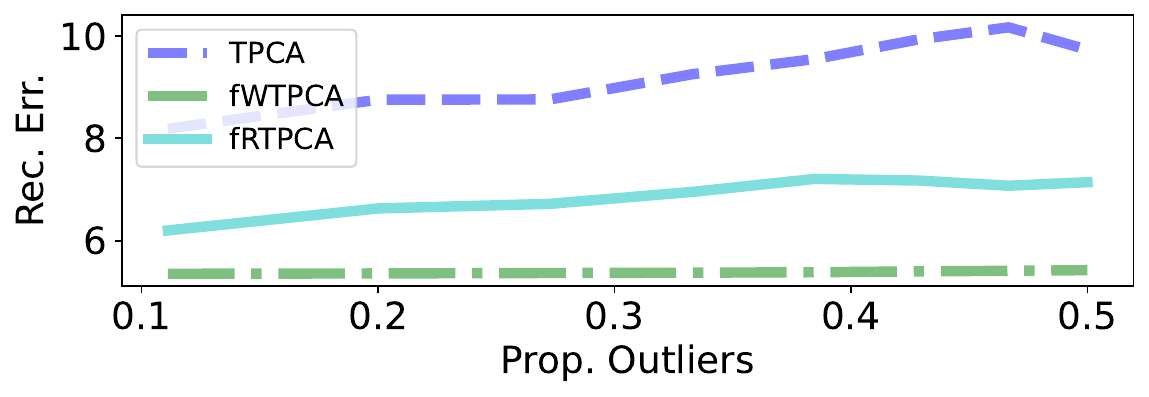}\vspace{-2mm}
    \caption{The cumulative reconstruction error of the $40$ inlier hands using the first $k=1$ principal direction where we gradually add hairball outliers.\vspace{-2mm}}
    \label{fig: hand_rec_ablation1}
\end{figure}

\begin{figure}[ht]
    \centering
    \includegraphics[width = \linewidth]{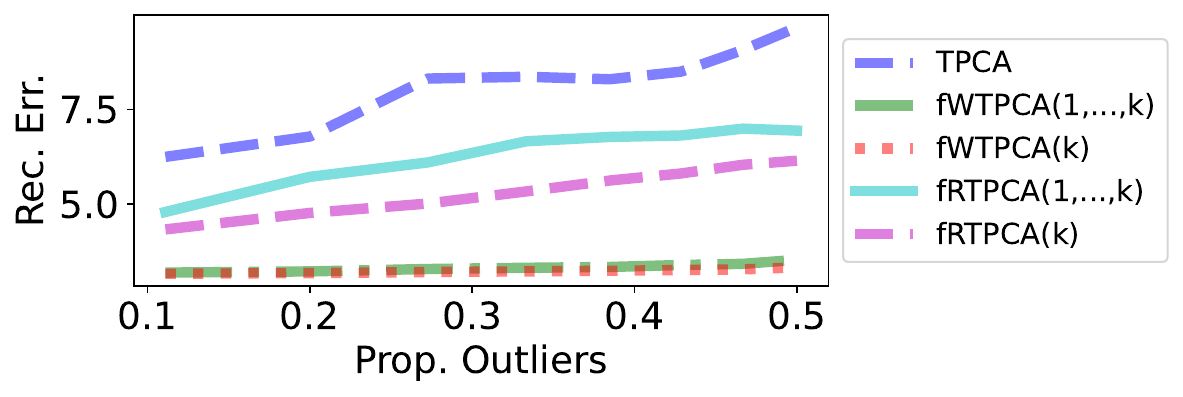}\vspace{-2mm}
    \caption{The cumulative reconstruction error of the $40$ inlier hands using the first $k=2$ principal directions where we gradually add hairball outliers.\vspace{-2mm}}
    \label{fig: hand_rec_ablation}
\end{figure}

\flushcolsend

\end{document}